\renewcommand\UrlFont{\color{blue}\rmfamily}
\newcommand{\norm}[1]{\bigl\lVert#1\bigl\rVert}
\newcommand{\ug}{\mathbf{u}}
\newcommand{\w}{\mathbf{w}}
\newcommand{\ub}{\bar{\mathbf{u}}}
\newcommand{\ut}{\tilde{\mathbf{u}}}
\newcommand{\pdf}{e^{-\frac{1}{2}\lVert\ug\rVert^2}}
\newcommand{\ab}{\mathbf{a}}
\newcommand{\bb}{\mathbf{b}}
\newcommand{\vb}{\mathbf{v}}
\newcommand{\x}{\mathbf{x}}
\newcommand{\y}{\mathbf{y}}
\newcommand{\m}{\mathbf{m}}
\newcommand{\X}{\mathcal{X}}
\newcommand{\E}{\mathbb{E}}
\newcommand{\R}{\mathbb{R}}
\newcommand{\V}{\mathbb{V}}
\newcommand{\N}{\mathcal{N}}
\newcommand{\p}{\mathrm{pdf}}
\newcommand{\bigO}{\mathcal{O}}
\begin{document}
\title{Sparse Perturbations for Improved Convergence in Stochastic Zeroth-Order Optimization
}
\titlerunning{Sparse Perturbations for Improved Convergence in SZO Optimization}
%
\author{Mayumi Ohta\inst{1}\and
Nathaniel Berger\inst{1} \and
Artem Sokolov\inst{1}\and
Stefan Riezler\inst{1,2}}
\authorrunning{M. Ohta et al.}
%
\institute{Department of Computational Linguistics \and Interdisciplinary Center for Scientific Computing (IWR)\\
Heidelberg University\\
Im Neuenheimer Feld 325\\
69120 Heidelberg, Germany \\
\email{\{ohta,berger,sokolov,riezler\}@cl.uni-heidelberg.de}\\
\url{https://www.cl.uni-heidelberg.de/statnlpgroup/}}
\maketitle              
%
\begin{abstract}
Interest in stochastic zeroth-order (SZO) methods has recently been revived in black-box optimization scenarios such as adversarial black-box attacks to deep neural networks. SZO methods only require the ability to evaluate the objective function at random input points, however, their weakness is the dependency of their convergence speed on the dimensionality of the function to be evaluated. We present a sparse SZO optimization method that reduces this factor to the expected dimensionality of the random perturbation during learning. We give a proof that justifies this reduction for sparse SZO optimization for non-convex functions. Furthermore, we present experimental results for neural networks on MNIST and CIFAR that show empirical sparsity of true gradients, and faster convergence in training loss and test accuracy and a smaller distance of the gradient approximation to the true gradient in sparse SZO compared to dense SZO. 

\keywords{Nonconvex Optimization \and Gradient-free Optimization \and Zeroth-order Optimization.}
\end{abstract}
%
%
\section{Introduction}

Zeroth-order optimization methods have gained renewed interest for solving machine learning problems where only the zeroth-order oracle, i.e., the value of the objective function but no explicit gradient, is available. Recent examples include black-box attacks on deep neural networks where adversarial images that lead to misclassification are found by approximating the gradient through a comparison of function values at random perturbations of input images \cite{ChenETAL:17}. The advantage of simple random search for scalable and reproducible gradient-free optimization has also been recognized in reinforcement learning \cite{ManiaETAL:18,SalimansETAL:17} and hyperparameter tuning for deep neural networks \cite{EbrahimiETAL:17}, and it is a mainstay in optimization of black-box systems and in simulation optimization \cite{Fu:06,NesterovSpokoiny:15}. 
While zeroth-order optimization applies in principle even to non-differentiable functions, in practice Lipschitz-smoothness of the black-box function being evaluated can be assumed. This allows to prove convergence for various zeroth-order gradient approximations \cite{AgarwalETAL:10,DuchiETAL:15,FlaxmanETAL:05,GhadimiLan:12,NesterovSpokoiny:15,Shamir:17,YueJoachims:09}. However, even in the optimal case, zeroth-order optimization of $n$-dimensional functions suffers a factor of $\sqrt{n}$ in convergence rate compared to first-order gradient-based optimization. 
The goal of our paper is to show theoretically and practically that \emph{sparse perturbations} in stochastic zeroth-order (SZO) optimization can improve convergence speed considerably by replacing the dependency on the dimensionality of the objective function by a dependency on the expected dimensionality of the random perturbation vector. This approach can be motivated by an observation of empirical sparsity of gradients in our experiments on neural networks, or by natural sparsity of gradients in applications to linear models with sparse input features \cite{SokolovETAL:18}. 
We give a general convergence proof for non-convex stochastic zeroth-order optimization for Lipschitz-smooth functions that is independent of the dimensionality reduction schedule applied, and shows possible linear improvements in iteration complexity. Our proof is based on \cite{NesterovSpokoiny:15} and fills in the necessary gaps to verify the dependency of convergence speed on the dimensionality of the perturbation instead of on the full functional dimensionality.
We present experiments that perform dimensionality reduction on the random perturbation vector by iteratively selecting parameters with high magnitude for further SZO tuning, and freezing other parameters at their current values. Another dimensionality reduction technique selects random masks by the heldout performance of selected sub-networks, and once a sub-network architecture is identified, it is further fine-tuned by SZO optimization. 
In our experiments, we purposely choose an application that allows optimization with standard first-order gradient-based techniques, in order to show improved gradient approximation by our sparse SZO technique compared to standard SZO with full perturbations. Our experimental results confirm a smaller distance to the true sparse gradient for the gradient obtained by sparse SZO, and improved convergence in training loss and test accuracy. 
Furthermore, we compare our technique to a zeroth-order version of iterative magnitude pruning \cite{FrankleCarbin:19,HanETAL:15}. This technique zeros-out unimportant parts of the weight vector while our proposed technique zeros-out only perturbations and freezes unimportant parts of the weight vector at their current values. We find similar convergence speed, but a strong overtraining effect on the test set if weights are pruned to zero values. An estimation of the local Lipschitz constant for the trained models shows that it is growing in the number of iterations for the pruning approach, indicating that zeroing-out low magnitude weights may lead the optimization procedure in a less smooth region of the search space. We conjecture that a search path with a small local Lipschitz constant might have a similar desirable effect as flat minima \cite{KeskarETAL:17} in terms of generalization on unseen test data. Our code is publicly available\footnote{\UrlFont{https://github.com/StatNLP/sparse\_szo}}.

\section{Related Work}

SZO techniques in optimization date back to the finite-difference method for gradient estimation of \cite{KieferWolfowitz:52} where the value of each component of a weight vector is perturbed separately while holding the other components at nominal value. This technique has since been replaced by more efficient methods based on simultaneous perturbation of all weight vector components \cite{KushnerYin:03,Spall:92,Spall:03}. The central idea of these approaches can be described as approximating non-differentiable functions by smoothing techniques, and applying first-order optimization to the smoothed function \cite{FlaxmanETAL:05}. Several works have investigated different update rules and shown the advantages of two-point or multi-point feedback for improved convergence speed \cite{AgarwalETAL:10,DuchiETAL:15,GhadimiLan:12,LiuETAL:18,NesterovSpokoiny:15,Shamir:17,YueJoachims:09}. Recent works have investigated sparsity methods for improved convergence in high dimensions \cite{BalasubramanianGhadimi:18,WangETAL:18}. These works have to make strong assumptions of function sparsity or gradient sparsity. A precursor to our work that applies sparse SZO to linear models has been presented by \cite{SokolovETAL:18}.

Connections of SZO methods to evolutionary algorithms and reinforcement learning have first been described in \cite{Spall:03}. Recent work has applied SZO techniques successfully to reinforcement learning and hyperparameter search for deep neural networks \cite{EbrahimiETAL:17,ManiaETAL:18,PlappertETAL:18,SalimansETAL:17,SehnkeETAL:10}. Similar to these works, in our experiments we apply SZO techniques to Lipschitz-smooth deep neural networks. 

Recent practical applications of SZO techniques have been presented in the context of adversarial black-box attacks on deep neural networks. Here a classification function is evaluated at random perturbations of input images with the goal of efficiently finding images that lead to misclassification  \cite{ChenETAL:17,ChengETAL:19}. Because of the high dimensionality of the adversarial attack space in image classification, heuristic methods to dimensionality reduction of perturbations in SZO optimization have already put to practice in \cite{ChenETAL:17}. Our approach presents a theoretical foundation for these heuristics.

\section{Sparse Perturbations in SZO Optimization for Nonconvex Objectives}

We study a stochastic optimization problem of the form
\begin{align}
 \min_\w f(\w), \text{ where } f(\w) := \E_\x[F(\w, \x)],
\end{align}
and where $\E_\x$ denotes the expectation over inputs $\x \in \X$, and $\w \in \R^n$ parameterizes the objective function $F$. 
We address the case of non-convex functions $F$ for which we assume Lipschitz-smoothness\footnote{$C^{p,k}$ denotes the class of $p$ times differentiable functions whose $k$-th derivative  is Lipschitz continuous.}, i.e., 
$F(\w,\x) \in C^{1,1} $ is Lipschitz-smooth $ \mathrm{iff} \; \forall \w, \w', \x: 
  \norm{\nabla F(\w,\x) - \nabla F(\w',\x)} \leq L(F) \norm{\w - \w'}$, 
where $\norm{\cdot}$ is the L2-norm and $L(F)$ denotes the Lipschitz constant of $F$. This condition is equivalent to 
\begin{align}\label{def:lipschitz}
  \lvert F(\w') - F(\w) - \langle \nabla F(\w), \w' - \w\rangle\rvert \leq \frac{L(F)}{2} \norm{\w - \w'}^2.
\end{align}
It directly follows that $f\in C^{1,1}$ if $F\in C^{1,1}$.

\cite{NesterovSpokoiny:15} show how to achieve a smooth version of an arbitrary function $f(\w)$ by Gaussian blurring that assures continuous derivatives everywhere in its domain. In their work, random perturbation of parameters is based on sampling a $n$-dimensional Gaussian random vector $\ug$ from a zero-mean isotropic multivariate Gaussian with unit $n \times n$ covariance matrix $\Sigma = \bm{I}$. The probability density function $\p(\ug)$ is defined by
\begin{align}
  \mathbf{u} &\sim \N(\bm{0}, \Sigma),\\
  \p(\mathbf{u}) &:=\frac{1}{\sqrt{(2\pi)^n\cdot \det\Sigma}} e^{-\frac{1}{2}\mathbf{u}^\top\Sigma^{-1}\mathbf{u}}.
\end{align}
A Gaussian approximation of a function $f$ is then obtained by the expectation over perturbations $\E_{\ug}[f(\w + \mu \ug)]$, where $\mu > 0$ is a smoothing parameter.
Furthermore, a Lipschitz-continuous gradient can be derived even for a non-differentiable original function $f$ by applying standard differentiation rules to the Gaussian approximation, yielding $\E_{\ug}[\frac{f(\w + \mu \ug) - f(\w)}{\mu}\ug]$.

The central contribution of our work is to show how a sparsification of the random perturbation vector $\ug$ directly enters into improved bounds on iteration complexity. This is motivated by an observation of empirical sparsity of gradients in our experiments, as illustrated in the supplementary material (Figures \ref{fig:sparsity-cifar} and \ref{fig:sparsity-mnist}). Concrete techniques for sparsification will be discussed as parts of our algorithm (Section \ref{sec:algo}), however, our theoretical analysis applies to any method for sparse perturbation. 
Let $\m^{(t)} \in \{0,1\}^{n}$ be a mask specified at iteration $t$, and let $\odot$ denote the componentwise multiplication operator. A sparsified version $\bar{\vb}^{(t)}$ of a vector $\vb^{(t)} \in \R^{n}$ is gotten by
\begin{align}
  \bar{\vb}^{(t)} := \m^{(t)} \odot \vb^{(t)}. 
\end{align}
The number of nonzero parameters of $\bar{\vb}^{(t)}$ is defined for a given mask $\m^{(t)}$ as
$\bar{n}^{(t)} := \lVert \m^{(t)} \rVert.$  
Given the definition of a sparsification mask stated above, we can define a sparsified Gaussian random vector drawn from $\N(\bm{0}, \bm{I})$ by $\ub := \m \odot \ug$. Using this notion of sparse perturbation, we redefine the smooth approximation of $f$, with smoothing parameter $\mu > 0$, as
\begin{align}
 f_\mu(\w) :=& \E_{\ub}\left[ f(\w+\mu\ub) \right] .
\end{align}
Note that $f_\mu$ is Lipschitz-smooth with $L(f_\mu) < L(f)$.

\section{Convergence Analysis for Sparse SZO Optimization}

\cite{NesterovSpokoiny:15} show that for Lipschitz-smooth functions $F$, the distance of the gradient approximation to the true gradient can be bounded by the Lipschitz constant and by the norm of the random perturbation. The term $\E_{\ug}[\norm{\ug}^p]$ can itself be bounded by a function of the exponent $p$ and the dimensionality $n$ of the function space. This is how the dependency on $n$ enters iteration complexity bounds and where an adaptation to sparse perturbations enter the picture. The simple case of the squared norm of the random perturbation given below illustrates the idea. For $p=2$, we have
\begin{align}
\E_{\ub}\left[\norm{\ub}^2\right] &= \E_{\ub}\left[\bar{u}_1^2+\bar{u}_2^2+\cdots+\bar{u}_n^2\right] \nonumber\\
&= \E_{\ub}\left[\bar{u}_1^2\right]+\E_{\ub}\left[\bar{u}_2^2\right]+\cdots+\E_{\ub}\left[\bar{u}_n^2\right] \nonumber\\
&= \V_{\ub}\left[\bar{u}_1\right]+\V_{\ub}\left[\bar{u}_2\right]+\cdots+\V_{\ub}\left[\bar{u}_n\right] \nonumber\\
&= \bar{n}. \label{eq:lemma1_p_2}
\end{align}
Intuitively this means that if a coordinate $i$ in the parameter space is not perturbed, no variance $\V_{\ub}[\bar{u}_i]$ is incurred. The smaller the variance, the smaller the factor $\bar{n}$ that directly influences iteration complexity bounds. 
The central Lemma \ref{lemma1} gives a bound on the expected norm of the random perturbations for the general case of $p \geq 2$: Intuitively, if several coordinates are masked and thus not perturbed, the determinant of the covariance matrix reduces to a product of variances of the unmasked parameters. This allows us to bound the perturbation factor for each input by $\bar{n} \ll n$ where the sparsity pattern is given by the masking strategy determined in the algorithm (see Section~\ref{sec:algo}).
\begin{lemma}\label{lemma1}
Let $\ub = \m \odot \mathbf{u}$ with $\mathbf{u} \sim \N(\bm{0}, \bm{I})$, then
  \begin{align}
    \E_{\ub}\left[\norm{\ub}^p\right] \leq (\bar{n} + p)^{\sfrac{p}{2}} ~ \text{for} \; p \geq 2. \label{eq:lemma1}
    \end{align}
  \end{lemma}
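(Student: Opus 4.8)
The plan is to reduce the claim to a computation in dimension $\bar n$ and then to a single inequality between values of the Gamma function. First I would note that since $\bar u_i = m_i u_i$ with the $u_i$ i.i.d.\ $\N(0,1)$, the $n-\bar n$ masked coordinates contribute nothing while the $\bar n$ unmasked ones are i.i.d.\ standard normals; hence $\norm{\ub}$ has the same law as the Euclidean norm of a $\bar n$-dimensional isotropic Gaussian, and $\norm{\ub}^2$ is a chi-square variable with $\bar n$ degrees of freedom (the case $\bar n = 0$ is trivial, as then $\ub = \bm{0}$). This already explains conceptually why $\bar n$ must replace $n$: one could simply invoke Lemma~1 of \cite{NesterovSpokoiny:15} in ambient dimension $\bar n$. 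Since the paper aims to be self-contained on exactly this point, I would instead carry out the short direct derivation below.

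Second, I would make this quantitative by passing to polar coordinates in $\R^{\bar n}$. Writing $\kappa := \int_0^\infty r^{\bar n - 1} e^{-r^2/2}\,dr$, the unit-sphere surface measure factors out of both numerator and normalization and cancels, leaving
\begin{align}
\E_{\ub}\left[\norm{\ub}^p\right] = \frac{1}{\kappa}\int_0^\infty r^{\bar n + p - 1} e^{-r^2/2}\,dr.
\end{align}
The substitution $\tau = r^2/2$ turns each such integral into a Gamma integral, giving $\E_{\ub}[\norm{\ub}^p] = 2^{p/2}\,\Gamma\bigl(\tfrac{\bar n + p}{2}\bigr)\big/\Gamma\bigl(\tfrac{\bar n}{2}\bigr)$. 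Thus, setting $x := \bar n/2 > 0$ and $t := p/2 \ge 1$, the claim is equivalent to the bound $\Gamma(x+t)/\Gamma(x) \le (x+t)^{t}$, since multiplying through by $2^{p/2}$ produces exactly $(2x+2t)^{t} = (\bar n + p)^{p/2}$.

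Third, to prove $\Gamma(x+t) \le (x+t)^{t}\,\Gamma(x)$ I would split $t = k + s$ with $k := \lfloor t\rfloor \ge 1$ and $s := \{t\} \in [0,1)$. Peeling off the integer part via $\Gamma(z+1) = z\Gamma(z)$ writes $\Gamma(x+t) = \Gamma(x+s)\cdot\prod_{j=0}^{k-1}(x+s+j)$, and every factor $x+s+j$ is at most $x + s + k - 1 = x + t - 1 \le x+t$, so the product is at most $(x+t)^{k}$. For the fractional remainder I would use log-convexity of $\Gamma$ on $[x,x+1]$, which yields $\Gamma(x+s) \le \Gamma(x)^{1-s}\,\Gamma(x+1)^{s} = \Gamma(x)\,x^{s} \le \Gamma(x)\,(x+t)^{s}$. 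Multiplying the two estimates gives $\Gamma(x+t) \le \Gamma(x)\,(x+t)^{k+s} = \Gamma(x)\,(x+t)^{t}$, which closes the argument.

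The main obstacle is precisely this last step: $p$ is an arbitrary real number $\ge 2$, so there is no clean induction over integer moments, and the generous ``$+p$'' slack in $(\bar n + p)^{p/2}$ is exactly what is needed to absorb the non-integer part of $p/2$ — the crude bound ``each factor $\le x+t$'' handles the bulk while the log-convexity estimate handles the fractional tail. A minor point to state carefully is the degenerate case $\bar n = 0$; and it is worth recording that for $p = 2$ the inequality reduces to the equality $\E_{\ub}[\norm{\ub}^2] = \bar n$ already derived in~\eqref{eq:lemma1_p_2}, which serves as a consistency check that the constants match.
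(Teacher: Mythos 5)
Your proof is correct, but it takes a genuinely different route from the paper's. Both arguments share the same first step — reducing the expectation over $\ub\in\R^{n}$ to an expectation over the $\bar n$ unmasked coordinates — but from there they diverge. The paper follows Nesterov--Spokoiny: it splits the Gaussian weight as $e^{-\frac{1}{2}\norm{\ut}^2}=e^{-\frac{\tau}{2}\norm{\ut}^2}e^{-\frac{1-\tau}{2}\norm{\ut}^2}$, bounds $\norm{\ut}^p e^{-\frac{\tau}{2}\norm{\ut}^2}\leq\left(\frac{p}{\tau e}\right)^{p/2}$ pointwise, integrates the remaining Gaussian to pick up the factor $(1-\tau)^{-\bar n/2}$, and finally optimizes over $\tau\in(0,1)$ (Lemmata \ref{misc1} and \ref{misc2}). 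You instead compute the moment exactly, $\E_{\ub}[\norm{\ub}^p]=2^{p/2}\,\Gamma\bigl(\frac{\bar n+p}{2}\bigr)/\Gamma\bigl(\frac{\bar n}{2}\bigr)$, and reduce the claim to the Gamma-function inequality $\Gamma(x+t)\leq(x+t)^{t}\Gamma(x)$ for $x>0$, $t\geq 1$, which you establish by peeling off the integer part of $t$ with the functional equation and handling the fractional part via log-convexity of $\Gamma$. All steps check out: the chi-moment formula is standard, the product bound $\prod_{j=0}^{k-1}(x+s+j)\leq(x+t)^{k}$ is valid since each factor is at most $x+t-1$, the interpolation $\Gamma(x+s)\leq\Gamma(x)x^{s}$ follows from Bohr--Mollerup, and you correctly set aside the degenerate case $\bar n=0$ where the Gamma formula does not apply but the bound is trivial. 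What your approach buys is the exact value of the moment as an intermediate (strictly sharper information, and a built-in consistency check at $p=2$); what the paper's approach buys is self-containment with only elementary calculus, avoiding any appeal to properties of the Gamma function. One cosmetic caution: your $\kappa$ denotes a radial integral, whereas the paper reserves $\kappa$ for $\sqrt{(2\pi)^{\bar n}}$, so the symbol should be renamed if the two arguments are to coexist.
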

Given the reduction of the perturbation factor to $\bar{n} \ll n$, we can use the reduced factor in further Lemmata and in the main Theorem that shows improved iteration complexity of sparse SZO optimization. In the following we will state the Lemmata and Theorems that extend \cite{NesterovSpokoiny:15} to the case of sparse perturbations. For completeness, full proofs of all Lemmata and Theorems can be found in the supplementary material.

Lemma \ref{lemma2} applies standard differentiation rules to the function $f_\mu(\w)$, yielding a Lipschitz-continuous gradient. 

\begin{lemma}\label{lemma2}
Let $F\in C^{1,1}$ and $\ub = \m \odot \mathbf{u}$ with  $\mathbf{u} \sim \N(\bm{0}, \bm{I})$. Then we have
  \begin{align}
    \nabla f_\mu(\w) & = \E_{\ub}[\frac{f(\w + \mu \ub)}{\mu}\ub] \label{eq:1-point}\\ 
    & = \E_{\ub}[\frac{f(\w + \mu \ub) - f(\w)}{\mu}\ub] \label{eq:2-point}\\ 
    & = \E_{\ub}[\frac{f(\w + \mu \ub) - f(\w - \mu \ub)}{2\mu}\ub]. \label{eq:2-sided}
    \end{align}
  \end{lemma}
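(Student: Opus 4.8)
The plan is to mimic the Gaussian-smoothing calculation of \cite{NesterovSpokoiny:15}, but carried out over the $\bar n$-dimensional subspace $V := \mathrm{span}\{\mathbf{e}_i : m_i = 1\}$ on which $\ub = \m \odot \ug$ is supported, rather than over all of $\R^n$. First I would write the smoothed objective as an integral against the density of $\ub$ \emph{restricted to $V$}, i.e.\ with Lebesgue measure $d\ub$ on $V$ and normalization $(2\pi)^{\bar n/2}$,
\begin{align*}
 f_\mu(\w) = \frac{1}{(2\pi)^{\bar n/2}} \int_{V} f(\w + \mu\ub)\, e^{-\frac12\norm{\ub}^2}\, d\ub .
\end{align*}
The point of keeping everything on $V$ is that the normalizing constant and the (restricted) covariance determinant involve only the unmasked coordinates — the same mechanism that produces the factor $\bar n$ in Lemma~\ref{lemma1}. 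Substituting $\vb = \w + \mu\ub$, a change of variables inside the affine subspace $\w + V$, gives
\begin{align*}
 f_\mu(\w) = \frac{1}{(2\pi\mu^2)^{\bar n/2}} \int_{\w + V} f(\vb)\, e^{-\frac{1}{2\mu^2}\norm{\vb - \w}^2}\, d\vb ,
\end{align*}
so that $\w$ now enters, for the directions inside $V$, only through the Gaussian kernel.

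Next I would differentiate under the integral sign. Using $\nabla_\w e^{-\frac{1}{2\mu^2}\norm{\vb-\w}^2} = \frac{1}{\mu^2}(\vb - \w)\, e^{-\frac{1}{2\mu^2}\norm{\vb-\w}^2}$ together with $\vb - \w \in V$, and then undoing the substitution, I obtain
\begin{align*}
 \nabla f_\mu(\w) = \E_{\ub}\!\left[\frac{f(\w+\mu\ub)}{\mu}\,\ub\right],
\end{align*}
which is \eqref{eq:1-point}; here the gradient is read off in the perturbed coordinates, consistent with the algorithm freezing the masked ones. (Equivalently, one may integrate by parts in each unmasked coordinate $u_i$ using $\partial_{u_i} e^{-\frac12\norm{\ub}^2} = -u_i e^{-\frac12\norm{\ub}^2}$, which is perhaps the cleanest route and produces the boundary terms one must argue away.) Identity \eqref{eq:2-point} then follows by subtracting the vanishing term $\E_{\ub}[\frac{f(\w)}{\mu}\ub] = \frac{f(\w)}{\mu}\,\E_{\ub}[\ub] = \bm{0}$, since $\ub$ is zero-mean; and \eqref{eq:2-sided} follows from the symmetry $\ub \stackrel{d}{=} -\ub$, which yields $\E_{\ub}[\frac{f(\w - \mu\ub)}{\mu}\ub] = -\nabla f_\mu(\w)$, so that $\E_{\ub}[\frac{f(\w+\mu\ub) - f(\w-\mu\ub)}{2\mu}\ub] = \tfrac12\nabla f_\mu(\w) - \tfrac12\bigl(-\nabla f_\mu(\w)\bigr) = \nabla f_\mu(\w)$.

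The main obstacle is the rigorous justification of exchanging $\nabla_\w$ with $\E_{\ub}$ (equivalently, differentiation under the integral sign and the vanishing of the boundary terms in the integration by parts). This is exactly where $F \in C^{1,1}$, hence $f \in C^{1,1}$, is needed: Lipschitz continuity of $\nabla f$ gives $\norm{\nabla f(\w+\mu\ub)} \leq \norm{\nabla f(\w)} + L(f)\,\mu\norm{\ub}$, hence at most linear growth of $f$ along rays, so the Gaussian weight dominates and all integrals converge absolutely; combined with $\E_{\ub}[\norm{\ub}] < \infty$ (Lemma~\ref{lemma1} with $p=2$ and Jensen), this supplies the integrable dominating function for the Leibniz rule and makes Fubini applicable when rewriting integrals as expectations. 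One should also record explicitly that $f_\mu$ is differentiable in all $n$ directions (again by dominated convergence, $\nabla f_\mu(\w) = \E_{\ub}[\nabla f(\w+\mu\ub)]$), but that the one- and two-point formulas above capture precisely its component along the perturbed subspace $V$ — which is all the sparse update rule acts on.
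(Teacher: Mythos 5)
Your proposal is correct and follows essentially the same route as the paper's proof: the one-point identity via the change of variables $\vb = \w+\mu\ub$ inside the unmasked subspace and differentiation of the Gaussian kernel, the two-point identity via $\E_{\ub}[\ub]=\bm{0}$, and the two-sided identity via the symmetry $\ub \overset{d}{=} -\ub$ (the paper phrases this as a ``shift'' of $\w$ by $\mu\ub$, but it amounts to the same substitution). Two minor remarks: the Lipschitz bound on $\nabla f$ gives at most \emph{quadratic}, not linear, growth of $f$ along rays (which still suffices for domination by the Gaussian weight), and your explicit caveat that these formulas recover only the components of $\nabla f_\mu$ along the unmasked subspace $V$ --- the full gradient of $f_\mu$ generally has nonzero masked components, while the right-hand sides are supported on $V$ --- is a welcome precision that the paper's proof glosses over.
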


In the rest of this paper we define our update rule as
\begin{align}\label{def:g_mu}
g_\mu(\w) := \frac{f(\w+\mu\ub) - f(\w)}{\mu}\ub
\end{align}
from (\ref{eq:2-point}).
Lemma \ref{lemma2} implies that $g_\mu(\w)$ is an unbiased estimator of $\nabla f_\mu(\w)$.

The next Lemma shows how to bound the distance of the gradient approximation to the true gradient in terms of $\mu$, $L(f)$, and the number of perturbations~$\bar{n}$.

\begin{lemma}\label{lemma3}
Let $f\in C^{1,1}$ Lipschitz-smooth, then
  \begin{align}
    \lVert \nabla f(\w) \rVert^2 &\leq 2 \lVert \nabla f_\mu(\w) \rVert^2 + \frac{\mu^2L^2(f)}{2} (\bar{n}+4)^{3}. \label{eq:lemma3}
    \end{align}
  \end{lemma}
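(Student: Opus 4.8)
Here is how I would attack Lemma~\ref{lemma3}. The plan is to follow the template of \cite{NesterovSpokoiny:15}: first control the \emph{bias} $\norm{\nabla f_\mu(\w) - \nabla f(\w)}$ of the Gaussian-smoothed gradient in terms of $\mu$, $L(f)$, and $\bar n$, and then convert this bias bound into the stated inequality on $\norm{\nabla f(\w)}^2$ by the triangle inequality together with the elementary estimate $(a+b)^2 \le 2a^2 + 2b^2$.

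For the bias term, I would start from the representation of $\nabla f_\mu(\w)$ in~(\ref{eq:2-point}) of Lemma~\ref{lemma2}, namely $\nabla f_\mu(\w) = \E_{\ub}\!\left[\frac{f(\w+\mu\ub) - f(\w)}{\mu}\ub\right]$, and subtract the first-order identity $\E_{\ub}[\langle \nabla f(\w),\ub\rangle\,\ub] = \nabla f(\w)$, which holds because the sparsified perturbation $\ub$ has identity covariance on the coordinates it actually perturbs. This yields
\begin{align*}
 \nabla f_\mu(\w) - \nabla f(\w) = \E_{\ub}\!\left[\frac{f(\w+\mu\ub) - f(\w) - \mu\langle\nabla f(\w),\ub\rangle}{\mu}\,\ub\right].
\end{align*}
Moving the norm inside the expectation by Jensen's inequality and bounding the scalar numerator with the Lipschitz-smoothness inequality~(\ref{def:lipschitz}) applied to $f\in C^{1,1}$, i.e. $\lvert f(\w+\mu\ub) - f(\w) - \mu\langle\nabla f(\w),\ub\rangle\rvert \le \frac{L(f)}{2}\mu^2\norm{\ub}^2$, gives $\norm{\nabla f_\mu(\w) - \nabla f(\w)} \le \frac{\mu L(f)}{2}\,\E_{\ub}[\norm{\ub}^3]$. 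I then invoke Lemma~\ref{lemma1} with $p=3$, so that $\E_{\ub}[\norm{\ub}^3] \le (\bar n + 3)^{\sfrac{3}{2}} \le (\bar n + 4)^{\sfrac{3}{2}}$, and hence $\norm{\nabla f_\mu(\w) - \nabla f(\w)} \le \frac{\mu L(f)}{2}(\bar n + 4)^{\sfrac{3}{2}}$.

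Finally, $\norm{\nabla f(\w)} \le \norm{\nabla f_\mu(\w)} + \norm{\nabla f_\mu(\w) - \nabla f(\w)}$, and squaring with $(a+b)^2 \le 2a^2 + 2b^2$ gives
\begin{align*}
 \norm{\nabla f(\w)}^2 \le 2\norm{\nabla f_\mu(\w)}^2 + 2\cdot\tfrac{\mu^2 L^2(f)}{4}(\bar n + 4)^3 = 2\norm{\nabla f_\mu(\w)}^2 + \tfrac{\mu^2 L^2(f)}{2}(\bar n + 4)^3,
\end{align*}
which is~(\ref{eq:lemma3}). I expect the main obstacle to be the bias step: one must carefully track how the mask $\m$ enters the identity $\E_{\ub}[\langle\nabla f(\w),\ub\rangle\,\ub]$, since the covariance of $\ub$ is the identity only on the support of $\m$. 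This is precisely the place where the ambient dimension $n$ gets replaced by the perturbation dimension $\bar n$, and it is also where the (empirically motivated) near-sparsity of $\nabla f$ relative to $\m$ is exploited. The remaining pieces---Jensen's inequality, the $C^{1,1}$ estimate, the correct choice of exponent in Lemma~\ref{lemma1}, and the final $(a+b)^2\le 2a^2+2b^2$ manipulation---are routine.
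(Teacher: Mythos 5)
Your proof is correct and takes essentially the same route as the paper's: both bound the bias $\norm{\nabla f_\mu(\w)-\nabla f(\w)}$ by subtracting $\E_{\ub}[\langle\nabla f(\w),\ub\rangle\,\ub]$, pushing the norm inside the expectation, applying the $C^{1,1}$ estimate \eqref{def:lipschitz} to get $\frac{\mu L(f)}{2}\E_{\ub}[\norm{\ub}^3]$, invoking Lemma~\ref{lemma1} with $p=3$, and finishing with $\norm{\ab}^2\le 2\norm{\ab-\bb}^2+2\norm{\bb}^2$ (the paper loosens $(\bar n+3)^3$ to $(\bar n+4)^3$ after squaring rather than before, which is immaterial). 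The subtlety you flag at the end is genuine: since the covariance of $\ub$ is $\mathrm{diag}(\m)$ rather than the identity, $\E_{\ub}[\langle\nabla f(\w),\ub\rangle\,\ub]=\m\odot\nabla f(\w)$, which equals $\nabla f(\w)$ only when the gradient is supported on the unmasked coordinates --- the paper's proof silently elides this by citing Lemma~\ref{misc3}, which is proved for the unmasked $\ug$.
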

  
The central Theorem shows that the iteration complexity of an SZO algorithm based on update rule \eqref{def:g_mu} is bounded by the Lipschitz constant $L$ and the expected dimensionality $\hat{n}$ of sparse perturbations with respect to iterations: 
\begin{theorem}\label{theorem1}
Assume a sequence $\{\w^{(t)}\}_{t\geq 0}$ be generated by Algorithm \ref{algorithm1}. Let $\mathcal{X} = \{\x^{(t)}\}_{t\geq 0}$ and $ \bar{\mathcal{U}} = \{\ub^{(t)}\}_{t\geq 0}$. Furthermore, let $f_\mu^\star$ denote a stationary point such that $f_\mu^\star \leq f_\mu(w^{(t)}) ~ \forall \; t>0$  and $\hat{n} \geq \E_{t}[\bar{n}^
{(t)}] := \frac{1}{T+1}\sum_{t=0}^{T+1}\bar{n}^
{(t)}$ the upper bound of the expected number of nonzero entries of~$\bar{\mathcal{U}}$. Then, choosing learning rate $\hat{h} = \frac{1}{4(\hat{n}+4)L}$ and $\mu = \Omega{\left(\dfrac{\epsilon}{\hat{n}^{\sfrac{3}{2}}L}\right)}$ where $L(f)\leq L$ for all $f(\w^{(t)})$, we have
  \begin{align}
    \E_{\bar{\mathcal{U}},\mathcal{X}}\left[\norm{\nabla f(\w^{(T)})}^2\right] \leq \epsilon^2
  \end{align}
for any $T = \bigO{\left(\frac{\hat{n}L}{\epsilon^2}\right)}$.
\end{theorem}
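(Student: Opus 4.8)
The plan is to replay the convergence analysis of \cite{NesterovSpokoiny:15} for the forward-difference oracle \eqref{def:g_mu}, but to carry the per-iteration sparse dimension $\bar{n}^{(t)}$ through every estimate in which the ambient dimension $n$ would otherwise appear; the source of this replacement is Lemma \ref{lemma1}. First I would write the one-step inequality for the smoothed objective. Since $f_\mu\in C^{1,1}$ with $L(f_\mu)\le L(f)\le L$, applying the descent inequality \eqref{def:lipschitz} to $f_\mu$ along the update $\w^{(t+1)} = \w^{(t)} - \hat{h}\,g_\mu(\w^{(t)})$ gives
\begin{align}
 f_\mu(\w^{(t+1)}) \le f_\mu(\w^{(t)}) - \hat{h}\,\langle \nabla f_\mu(\w^{(t)}),\, g_\mu(\w^{(t)})\rangle + \frac{L\hat{h}^2}{2}\norm{g_\mu(\w^{(t)})}^2. \nonumber
\end{align}
Taking the expectation over $\ub^{(t)}$ and the fresh input $\x^{(t)}$ conditioned on $\w^{(t)}$, Lemma \ref{lemma2} collapses the inner-product term to $-\hat{h}\norm{\nabla f_\mu(\w^{(t)})}^2$ by unbiasedness of $g_\mu$, while the squared-norm term is handled by a Nesterov--Spokoiny-type second-moment bound, proved in the supplement, of the schematic form $\E_{\ub}[\norm{g_\mu(\w)}^2] \le \tfrac{\mu^2 L^2}{2}(\bar{n}+6)^3 + 2(\bar{n}+4)\norm{\nabla f(\w)}^2$; here the cubic term comes from Lemma \ref{lemma1} with $p=6$, and the linear-in-$\bar{n}$ term from a low-order moment computation for $\E_{\ub}[\langle\nabla f(\w),\ub\rangle^2\norm{\ub}^2]$ — both collapsing the sum over coordinates to the $\bar{n}^{(t)}$ unmasked ones.

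Next I would use Lemma \ref{lemma3} in the form $\norm{\nabla f_\mu(\w)}^2 \ge \tfrac12\norm{\nabla f(\w)}^2 - \tfrac{\mu^2 L^2}{4}(\bar{n}+4)^3$ to rewrite the one-step inequality purely in terms of $\E[\norm{\nabla f(\w^{(t)})}^2]$ plus $\mu^2$-order residuals. Substituting the prescribed step size $\hat{h} = \tfrac{1}{4(\hat{n}+4)L}$ makes the coefficient of $\E[\norm{\nabla f(\w^{(t)})}^2]$ equal to $-\tfrac{2\hat{n}+4-\bar{n}^{(t)}}{16L(\hat{n}+4)^2}$, which is negative and of order $-\hat{h}$ as long as the masking schedule keeps $\bar{n}^{(t)}$ from overshooting $\hat{n}$; telescoping from $t=0$ to $T$ and dropping the last term via $f_\mu^\star \le f_\mu(\w^{(t)})$ then bounds a positively weighted average of $\E[\norm{\nabla f(\w^{(t)})}^2]$. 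The sum of the weights is bounded below by a multiple of $\tfrac{T+1}{L(\hat{n}+4)}$ precisely through the hypothesis $\hat{n} \ge \E_t[\bar{n}^{(t)}] = \tfrac{1}{T+1}\sum_t \bar{n}^{(t)}$, and interpreting $\w^{(T)}$ as the best (equivalently, a uniformly sampled) iterate among $\{\w^{(t)}\}_{t=0}^{T}$ yields, for absolute constants $c_1,c_2$,
\begin{align}
 \E_{\bar{\mathcal{U}},\mathcal{X}}\bigl[\norm{\nabla f(\w^{(T)})}^2\bigr] \le \frac{c_1 L(\hat{n}+4)\bigl(f_\mu(\w^{(0)}) - f_\mu^\star\bigr)}{T+1} + c_2\,\mu^2 L^2\hat{n}^3. \nonumber
\end{align}
Choosing $\mu$ of order $\epsilon/(\hat{n}^{\sfrac{3}{2}}L)$ forces the second term below $\tfrac{\epsilon^2}{2}$, and $T = \bigO(\hat{n}L/\epsilon^2)$ forces the first term below $\tfrac{\epsilon^2}{2}$, giving the claim.

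The step I expect to be the main obstacle is the bookkeeping around the instantaneous sparse dimension $\bar{n}^{(t)}$: the step size, the bias bound of Lemma \ref{lemma3}, and the second-moment bound are naturally written with $\bar{n}^{(t)}$, whereas the statement pins the learning rate and the final rate to the aggregate $\hat{n}$. Making the telescoped coefficient uniformly negative, and making the cubic residuals aggregate to $\mu^2 L^2\hat{n}^3$ rather than $\mu^2 L^2 n^3$, is exactly where the hypothesis $\hat{n}\ge\E_t[\bar{n}^{(t)}]$ (and, for the cubic residual, control of $\bar{n}^{(t)}$ by $\hat{n}$ along the schedule) must be invoked carefully; once that is done, the remaining algebra is identical to \cite{NesterovSpokoiny:15} with $n$ replaced by $\bar{n}$.
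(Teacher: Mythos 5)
Your proposal is correct and follows essentially the same route as the paper's proof: the descent inequality for $f_\mu$ along the update, unbiasedness of $g_\mu$ from Lemma \ref{lemma2}, a second-moment bound on $g_\mu$ driven by Lemma \ref{lemma1}, the step size $\hat h = \tfrac{1}{4(\hat n+4)L}$, telescoping against $f_\mu^\star$ using $\hat n \geq \E_t[\bar n^{(t)}]$, and conversion between $\norm{\nabla f}^2$ and $\norm{\nabla f_\mu}^2$ via Lemma \ref{lemma3} before choosing $\mu$ and $T$. The only substantive variation is that you invoke the sharper Nesterov--Spokoiny second-moment bound $\tfrac{\mu^2L^2}{2}(\bar n+6)^3 + 2(\bar n+4)\norm{\nabla f(\w)}^2$ and fold Lemma \ref{lemma3} into each step, whereas the paper bounds $\E[\norm{g_\mu(\w)}^2]$ in terms of $\norm{\nabla f_\mu(\w)}^2$ and applies Lemma \ref{lemma3} once at the end; both yield the same final bound and the same choices of $\mu$ and $T$, and your correctly flagged concern about reconciling the per-iteration $\bar n^{(t)}$ with the aggregate $\hat n$ in the telescoped coefficients is a real issue that the paper's own argument also handles only loosely.
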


Note that $\hat{n} \geq \E_{t}\left[\bar{n}^{(t)}\right]$ is much smaller than $n = \E_{t}\left[n^{(t)}\right] = \frac{1}{T+1}\sum_{t=0}^{T+1} n$, if $\bar{n}^{(t)}$ is chosen such that $\bar{n}^{(t)} \ll n$ at each iteration $t>0$ of Algorithm \ref{algorithm1}. This Theorem implies that both strong sparsity patterns and the smoothness of the objective function can boost the convergence speed up to linear factor.

\begin{table}[t]
    \begin{center}
    \begin{tabular}{ccc}
        \toprule
         dataset & MNIST & CIFAR-10 \\
         \midrule
         trainset size & 50,000 & 40,000\\
         devset size & 10,000 & 10,000\\
         testset size & 10,000 & 10,000\\
         architecture & feed-forward NN & CNN\\
         model size & 266,610 & 4,301,642\\
         batch size & 64  & 64 \\
         learning rate $h$ & 0.2 & 0.1  \\
         smoothing $\mu$ & 0.05  & 0.05  \\
         variance & 1.0  & 1.0 \\
         total epochs & 100 & 100\\
         sparsification interval & 5 epochs & 5 epochs \\
         \bottomrule
    \end{tabular}
    \end{center}
    \caption{Data statistics and system settings.
    }
    \label{table:hyperparameters}
\end{table}

\section{Algorithms for Sparse SZO Optimization}
\label{sec:algo}

\begin{algorithm}[t!]
  \caption{Sparse SZO Optimization}\label{algorithm1}
  \begin{algorithmic}[1]
    \State INPUT: dataset $\X=\{\x_0, \cdots, \x_T\}$, sequence of learning rates $h$, sparsification interval $s$, number of samples $k$, smoothing parameter $\mu>0$ 
    \State Initialize mask: $\m^{(0)} = \bm{1}^n$ \hfill ($\bar{n}^{(0)} = n$)
    \label{line:init_mask}
    \State Initialize weights: $\w^{(0)}$
    \For{$t=0,\cdots, T$} 
    \If{$\mathrm{mod}(t,s) = 0$}
      \State Update mask $\m^{(t')} = \mathrm{get\_mask}(\w^{(t)})$ \label{line:get_mask}
      \If{pruning}
      \State  $\w^{(t)} = \m^{(t')} \odot \w^{(t)}$\label{line:set_weights}
      \label{line:mask_weights}
      \EndIf
    \EndIf
    \State Observe $\x^{(t)} \in \X$
    \For{$j=1, \cdots, k$}
    
    \State Sample unit vector $\mathbf{u}^{(t)} \sim \N(\bm{0},\bm{I})$
    \State Apply mask $\ub^{(t)} = \m^{(t')} \odot \mathbf{u}^{(t)}$ \hfill ($\bar{n}^{(t)} \ll n$)
    \label{line:mask_perturbations}
    \State Compute $g_\mu^{(j)}(\w^{(t)})$ 
    \EndFor
    \State Average $g_\mu(\w^{(t)}) = \underset{j}{\mathrm{avg}}(g_\mu^{(j)}(\w^{(t)}))$
    \State Update $\w^{(t+1)} = \w^{(t)} - h^{(t)}g_\mu(\w^{(t)})$
    \EndFor
    \State OUTPUT: sequence of $\{\w^{(t)}\}_{t\geq 0}$
  \end{algorithmic}
\end{algorithm}

\begin{figure*}[t]
\centering
\includegraphics[width=\textwidth]{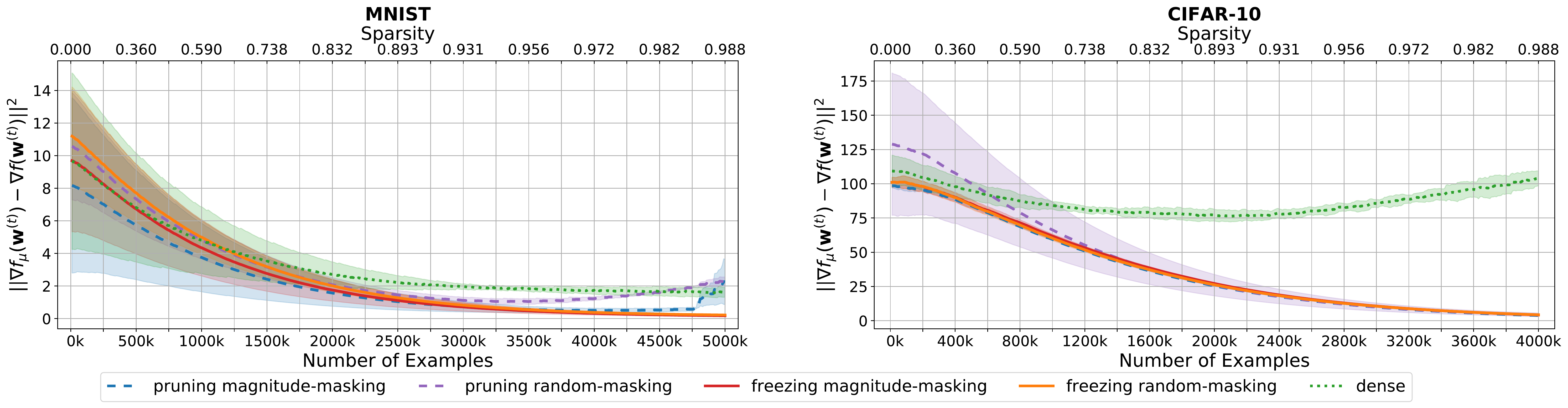}
\caption{Distance of gradient approximation to true gradient on training set, interpolation factor $0.99$.}
\label{fig:gradnorm}
\end{figure*}

\subsection{Masking strategies}
Algorithm \ref{algorithm1} starts with standard SZO optimization with full perturbations in the first iteration. This is done by initializing the mask to a $n$-dimensional vector of coordinates with value $1$ (line \ref{line:init_mask}). Guided by a schedule that applies sparsification at every $s$-th iteration, the  function $\mathrm{get\_mask(\cdot)}$ is applied (line \ref{line:get_mask}). We implemented a first strategy called \emph{magnitude masking}, and another one called \emph{random masking}. In \emph{magnitude masking}, we sort the indices according to their L1-norm magnitude, set a cut-off point, and mask the weights below the threshold. In \emph{random masking}, we sample 50 random mask patterns and select the one that performs best according to accuracy on a heldout set. Following \cite{FrankleETAL:19}, the sparsification interval corresponds to reducing 20\% of the remaining unmasked parameters at every 5 epochs.

\begin{figure*}[t]
\centering
\includegraphics[width=\textwidth]{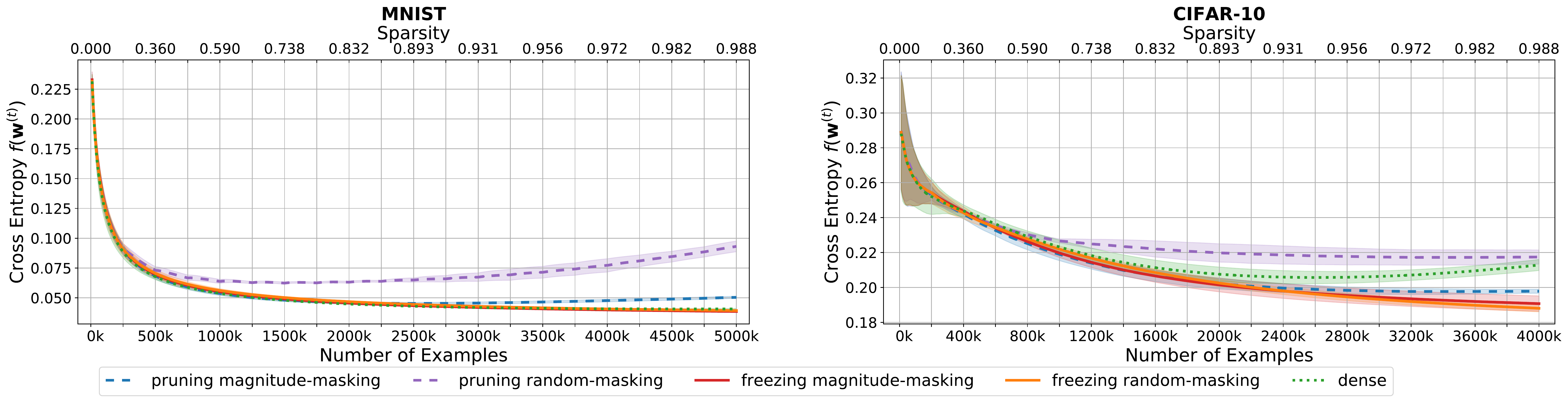}
\caption{Cumulative values for cross-entropy loss of sparse and dense SZO algorithms on the training set.}
\label{fig:train_loss}
\end{figure*}

\begin{figure*}[t]
\centering
\includegraphics[width=\textwidth]{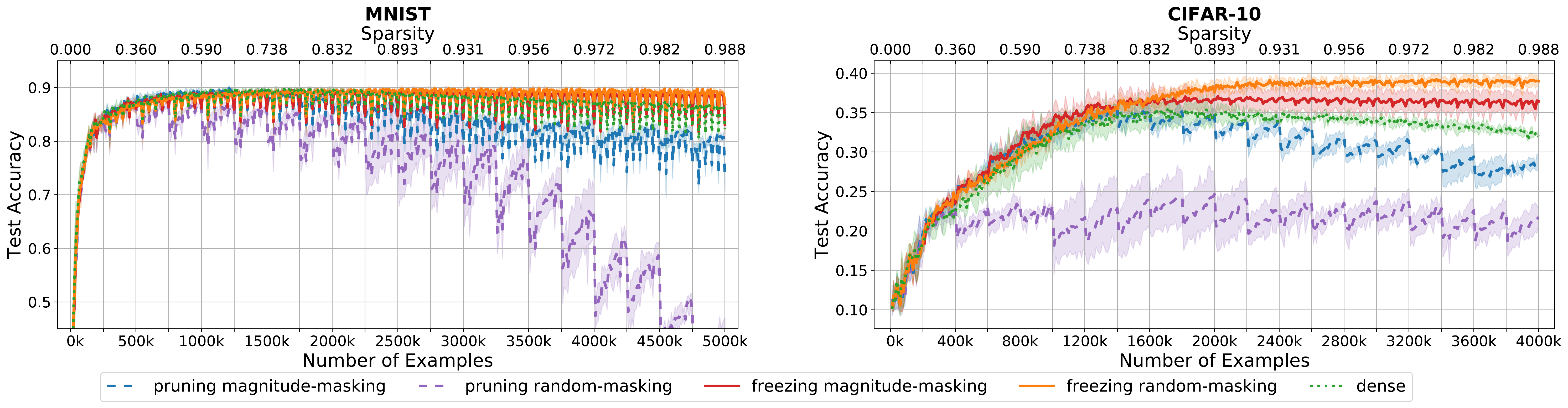}\\
\caption{Multi-class classification accuracy of sparse and dense SZO algorithms on test sets.}
\label{fig:test_acc}
\end{figure*}

\subsection{Sparse perturbations with pruning or freezing}
Algorithm \ref{algorithm1} is defined in a general form that allows \emph{pruning} and \emph{freezing} of masked parameters. In the \emph{pruning} variant, the same sparsification mask that is applied to the Gaussian perturbations (line \ref{line:mask_perturbations}) is also applied to the weight vector itself (line \ref{line:mask_weights}). That is, we keep the weight values from the previous iteration at the index whose mask value is one, and reset the weight values to zero at the index whose mask value is zero. This can be seen as a straightforward sparse-SZO extension of the iterative magnitude pruning method with rewinding to the value of the previous iteration as proposed by \cite{FrankleETAL:19}.  In the \emph{freezing} variant, we apply the sparsification mask only to the Gaussian perturbations (line \ref{line:mask_perturbations}), and inherit all the weight values from the previous iteration. That is, we keep the weight value unchanged at the index whose mask value is zero, and allow the value to be updated at the index whose mask value is one.

\begin{figure*}[t]
\centering
\includegraphics[width=\textwidth]{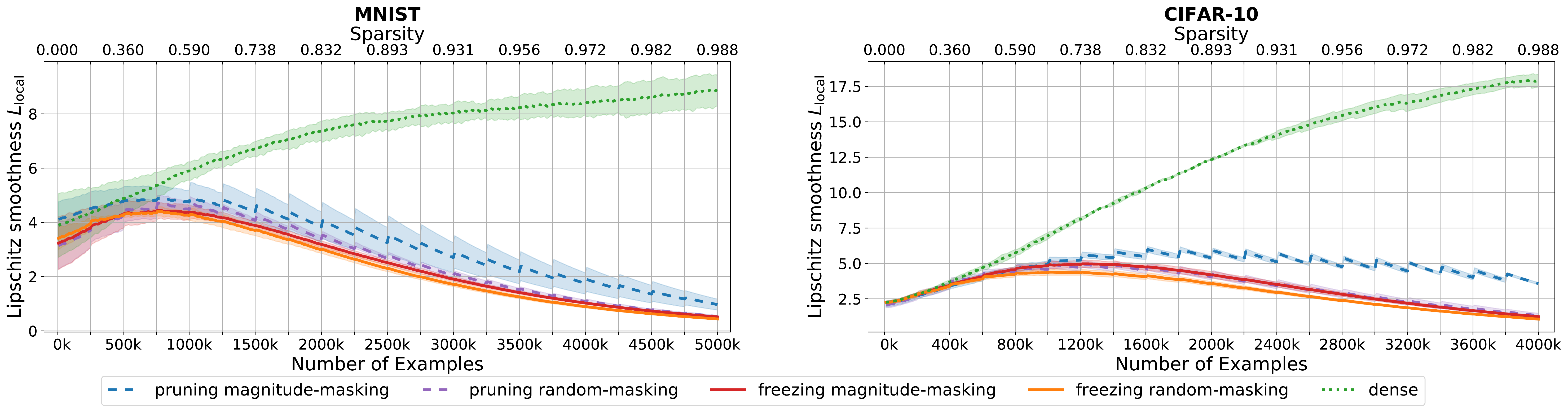}
\caption{Estimated local Lipschitz smoothness $L_{\mathrm{local}}$ of sparse and dense SZO algorithms along search path, interpolation factor $0.99$.} 
\label{fig:lipschitz}
\end{figure*}

\begin{figure*}[t]
\centering
\includegraphics[width=\textwidth]{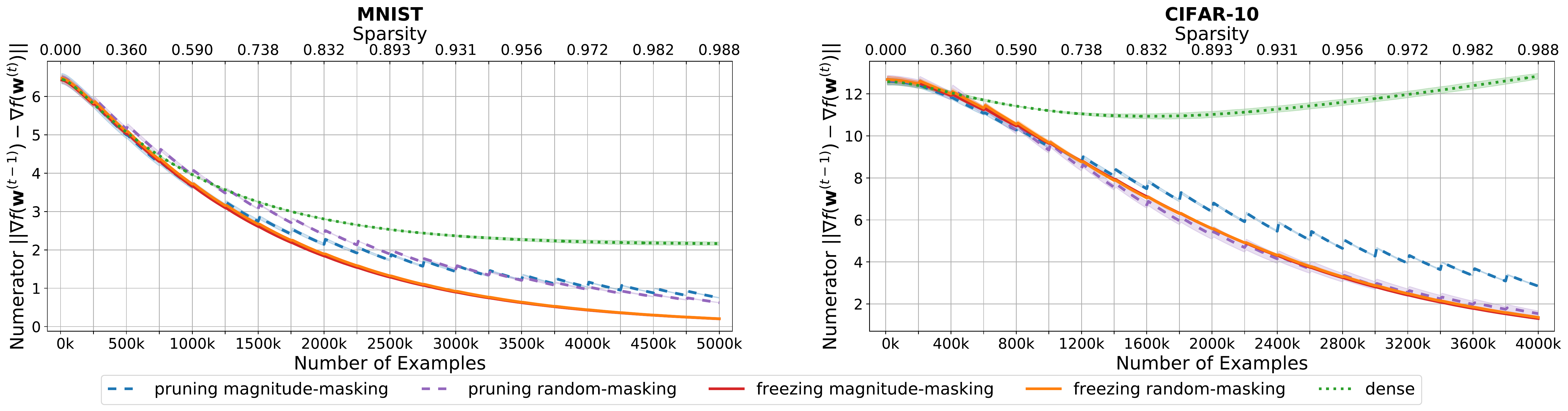}\\
\includegraphics[width=\textwidth]{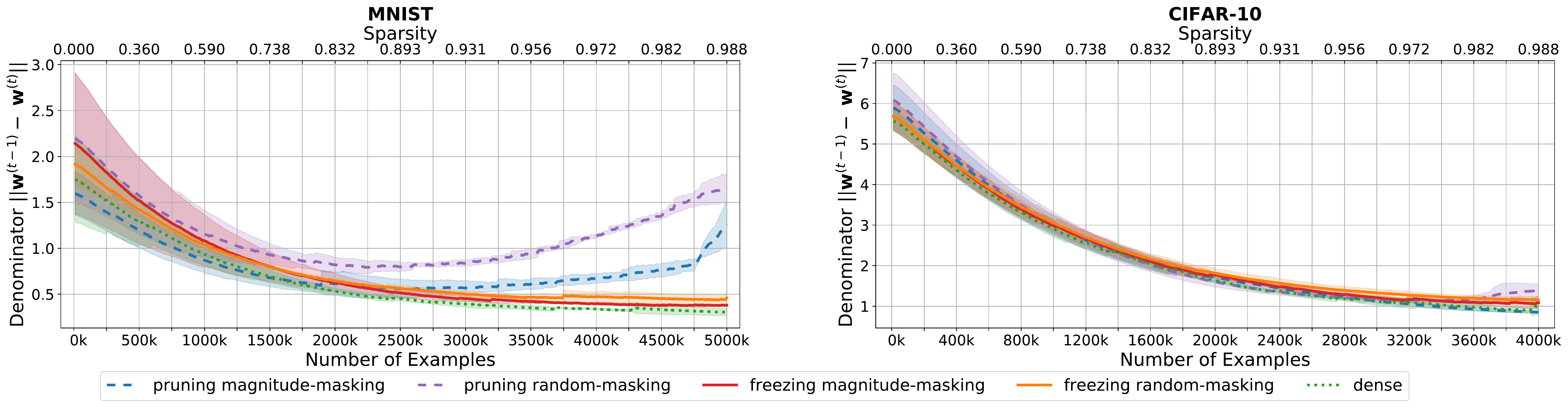}\\
\caption{Denominator and numerator of estimated local Lipschitz smoothness $L_{\mathrm{local}}$ along search path, interpolation factor $0.99$.}
\label{fig:lipschitz-denominator-numerator}
\end{figure*}

\section{Experiments}

We purposely chose experimental tasks where the true gradient can be calculated exactly in order to experimentally verify the improved gradient approximation by sparse perturbations in SZO. While the convergence rate in SZO is always is suboptimal compared to first-order optimization due to the used gradient approximation (see \cite{DuchiETAL:15} or \cite{Shamir:17}), our proofs show that sparse perturbations improve the distance of the approximate gradient to the true gradient (Lemma \ref{lemma3}) and thus the convergence rate (Theorem \ref{theorem1}). The experimental results given in the following confirm our theoretical analysis.

\subsection{Task and datasets}
We apply our sparse SZO algorithm to standard image classification benchmarks on the MNIST \cite{LeCun:98} and CIFAR-10 \cite{Krizhevsky:09} datasets. Since the SZO optimizers are less common in an off-the-shelf API package bundled in a framework, we implemented our custom SZO optimizers utilizing \texttt{pytorch}, and the data was downloaded via torchvision package v0.4.2. We followed the pre-defined train-test split, and partitioned 20\% of the train set for validation. 
Model performance is measured by successive evaluation of the gradient norm on the training set, multi-class classification accuracy on the test set, cumulative cross-entropy loss on the training set, and the local function smoothness, estimated by the local Lipschitz constant computed as
\begin{align}
L_{\mathrm{local}} := \frac{\norm{\nabla f(\w^{(t-1)}) - \nabla f(\w^{(t)})}}{\norm{\w^{(t-1)} - \w^{(t)}}}.
\label{local_lipschitz}
\end{align}

\subsection{Architectures and experimental settings}
For the MNIST experiments, we constructed a fully-connected feed-forward neural network models with 3 layers including batch normalization \cite{Ioffe:15} after each layer. For the CIFAR-10 experiments, we built a convolutional neural network with 2 convolutional layers with kernel size 3 and stride 1. After taking max pooling of these two convolutional layers, we use 2 extra fully-connected linear layers on top of it. We employed the xavier-normal weights initialization method \cite{Glorot:10} for all setups. Each model is trained by minimizing a standard cross-entropy loss for 100 epochs with batch size 64. We used a constant learning rate of 0.2 for MNIST and 0.1 for CIFAR, a constant smoothing parameter of 0.05, and two-sided perturbations in the SZO optimization for both datasets. We sampled 10 perturbations per update from a zero-centered Gaussian distribution with variance 1.0, computed the gradients for each, then the average was taken before update. 
Applying the 20\% reduction schedule 20 times throughout the training, the active model parameters were reduced from 266,610 to 3,844 in MNIST experiments, from 4,301,642 to 61,996 in CIFAR experiments. We computed sparsity as $1-\frac{\bar{n}^{(t)}}{n^{(t)}}$ every time we update the mask pattern. The sparsity value along iteration is given in the upper x-axis of each plot. The lower $x$-axis shows the number of examples visited during training in thousands. If indicated, plotted curves are smoothed by linear interpolation with the previous values. All plots show mean and standard deviation of results for 3 training runs under different random seeds. A summary of data statistics and system settings can be found in Table \ref{table:hyperparameters}.

\subsection{Experimental results}
Our experimental task was purposely chosen as an application that allows optimization with standard first-order (SFO) gradient-based techniques for comparison with SZO methods. 
Figure \ref{fig:gradnorm} shows that the distance of the approximate gradient to the true gradient converges to zero at the fastest rate for freezing methods, proving empirical support for \eqref{d1} in Lemma \ref{lemma3}. All four sparse SZO variants approach the true gradients considerably faster in most training steps on both training datasets than dense SZO (i.e.,  the full dimension of parameters is perturbed in each training step). We see that dense SZO is diverging from the true gradient on the CIFAR dataset where more than 4 million parameters are trained.

Figure \ref{fig:train_loss} gives cumulative values of the cross-entropy loss function on the respective training sets. We see that convergence in cross-entropy loss is similar for sparse and dense SZO variants on MNIST and advantageous for sparse SZO on CIFAR, with both weight pruning methods showing a slight divergence in cross-entropy loss.

Figure \ref{fig:test_acc} shows the accuracy of multi-class classification on MNIST and CIFAR-10 for the sparse and dense SZO variants. We see the fastest convergence in test accuracy for freezing methods, compared to pruning and dense SZO. Interestingly, we see that the test accuracy decreases after a while for all pruning variants, while it continues to increase for the freezing variants and the dense SZO algorithm.

\begin{figure*}[t]
\centering
\includegraphics[width=\textwidth]{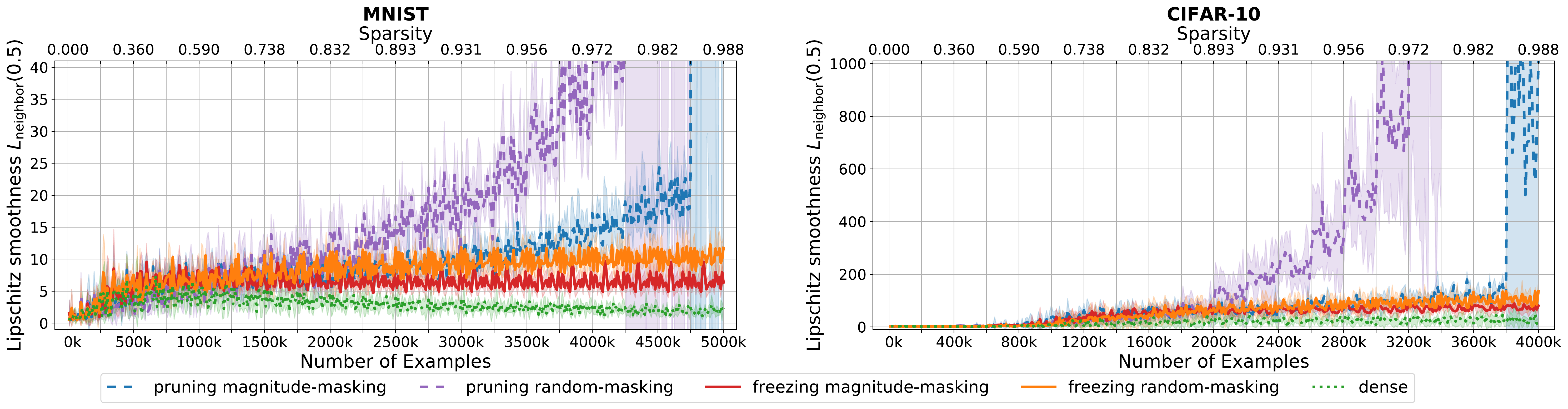}
\caption{Estimated local Lipschitz smoothness $L_{\mathrm{neighbor}}$ on test set in neighborhood of $w^{(t)}$, interpolation factor $0.8$.}
\label{fig:lipschitz-neighborhood}
\end{figure*}

One possibility to analyze this effect is to inspect the estimate of the local Lipschitz smoothness $L_{\mathrm{local}}$ defined in~\eqref{local_lipschitz} along the current search path on the training set. Figure~\ref{fig:lipschitz} shows that $L_{\mathrm{local}}$ decreases considerably along the search trajectory for sparse SZO optimization, with a faster decrease for freezing than for pruning methods. However, it increases considerably for the dense variant, leading the optimization procedure into less smooth areas of the search space. In order to verify that this is not a trivial effect due to pruned weight vectors, we consider numerator and denominator of $L_{\mathrm{local}}$ separately in Figure \ref{fig:lipschitz-denominator-numerator}. We see that the denominator and numerator are decreasing at different speeds for the sparse variants, with relatively larger differences between gradients causing an increase in the Lipschitz quotient. The numerator $\norm{\nabla f(w^{(t-1)}) - \nabla f(w^{(t)})}$ is successively reduced in the sparse methods, with a faster reduction for freezing than for pruning. The reduction indicates that sparse SZO methods find a path converging to the optimal point, while the dense variant seems not to be able to find such a path, but instead stays at the same distance or even increases the distance from the previous step.

Finally, we evaluate an estimate of the local Lipschitz smoothness in a neighborhood $\vb \sim \mathrm{Uniform}(-0.5,0.5)$ around the current parameter values $\w^{(t)}$ on the test set. $L_{\mathrm{neighbor}}$ is estimated by taking  the maximum of 10 samples per batch as 
\begin{align}
L_{\mathrm{neighbor}} := \underset{\vb}{\max} \frac{\norm{\nabla f(\w^{(t)}) - \nabla f(\w^{(t)}+\vb)}}{\norm{\vb}}.
\label{local_lipschitz_neighbor}
\end{align}
Figure \ref{fig:lipschitz-neighborhood} shows that $L_{\mathrm{neighbor}}$ is staying roughly constant on the test set in the neighborhood of the parameter estimates obtained during training for freezing and dense SZO. However, it is increasing considerably for both variants of pruning, indicating that pruning methods converge to less smooth regions in parameter space.

\section{Conclusion}

SZO methods are flexible and simple tools for provably convergent optimization of black-box functions only from function evaluations at random points. A theoretical analysis of these methods can be given under mild assumptions of Lipschitz-smoothness of the function to be optimized. However, even in the best case, such gradient-free methods suffer a factor of $\sqrt{n}$ in iteration complexity, depending on the dimensionality $n$ of the evaluated function, compared to methods that employ gradient information. This makes SZO techniques always second-best if gradient information is available, especially in high dimensional optimization. However, SZO techniques may be the only option in black-box optimization scenarios where it is desirable as well to improve convergence speed. Our paper showed that the dimensionality factor can be reduced to the expected dimensionality of random perturbations during training, independent of the dimensionality reduction schedule employed. This allows considerable speedups in convergence of SZO optimization in high dimensions. 
We presented experiments with masking schedules based on L1-magnitude and random masking, both confirming our theoretical result of an improved approximation of the true gradient by a sparse SZO gradient. This result is accompanied by an experimental finding of improved convergence in training loss and test accuracy. 
A further experiment compared sparse SZO where masked parameters are frozen at their current values to a variant of sparse SZO where masked parameters are pruned to zero values. This technique can be seen as a sparse SZO variant of the well-known technique of iterative magnitude pruning \cite{FrankleCarbin:19,HanETAL:15}. Our results show that pruning techniques perform worse in the world of SZO optimization than fine-tuning unmasked parameters while freezing masked parameters. By further inspection of the local Lipschitz smoothness along the search path of different algorithms we find that pruning in the SZO world can lead the optimization procedure in a locally less smooth search area, while freezing seems to lead to a smoother search path even compared to dense SZO optimization.

\section*{Acknowledgements}
 We would like to thank Michael Hagmann for proofreading the theoretical analysis, and the anonymous reviewers for their useful comments. The research reported in this paper was supported in part by the German research foundation (DFG) under grant RI-2221/4-1.

%
%
%
\bibliographystyle{splncs04}
\bibliography{paper}
%

\newpage

\section*{Appendix}
\renewcommand{\thesubsection}{\Alph{subsection}}

\subsection{Notation}

In order to adapt the theoretical analysis of \cite{NesterovSpokoiny:15} to the case of sparse perturbations, the probability density function $\p_{\ub}(\ub)$ and expectations $\E_{\ub}[f(\ub)]$ of random variables with respect to it have to be carefully defined.
Given a sparsification mask $\m \in \{0,1\}^{n}$, let $I_{\mathrm{masked}} := \{i | m_i = 0\}$ be a set of indices whose mask value is zero. Likewise, let $I_{\mathrm{unmasked}} := \{i | m_i = 1\}$ be a set of indices whose mask value is one. The probability density function of random perturbations $\ug \sim \N(\bm{0}, \bm{I})$ can be expressed as a product of univariate probability density functions $\p(u_i)$, where $u_i \sim \N(0,1)$. Then the probability density function $\p_{\ub}(\ub)$ can be defined as a conditional probability density function $\p_{\ug|u_i = 0, i\in I_{\mathrm{masked}}}(\ug)$, conditioned on zero-valued entries $u_i = 0$ corresponding to masked indices $i\in I_{\mathrm{masked}}$. Since the number of nonzero parameters is determined by the sparsification mask as $|I_{\mathrm{unmasked}}| = \bar{n}$, we can reduce the product over the full dimensionality $i \in \{1, \cdots n\}$ to the product over unmasked entries $i \in I_{\mathrm{unmasked}}$. The conditional probability density function $\p_{\ug|u_i = 0, i\in I_{\mathrm{masked}}}(\ug)$ is then defined as
\begin{align}
\p_{\ug|u_i = 0, i\in I_{\mathrm{masked}}}(\ug) & = \prod_{i\in I_{\mathrm{unmasked}}} \p(u_i) \\
& = \left(\frac{1}{\sqrt{2\pi}}\right)^{\bar{n}} \prod_{i\in I_{\mathrm{unmasked}}} e^{-\frac{1}{2}u_i^2}.
\end{align}
The expectation with respect to $\ub$ is defined as conditional expectation using aforementioned conditional probability density function. Let $f: \R^n \rightarrow \R$ be an arbitrary function, let $\int_{\ug\upharpoonright} \cdot d\ug$ denote the integral $\int_{\ug|u_i=0, i\in I_{\mathrm{masked}}} \cdot d\ug$, and let $\p_{\ug\upharpoonright}$ denote the conditional probability density function $\p_{\ug|u_i = 0, i\in I_{\mathrm{masked}}}$, and let $\kappa := \sqrt{(2\pi)^{\bar{n}}}$. Then we have
\begin{align}
\E_{\ub}[f(\ub)] &:= \E_{\ug|u_i=0, i\in I_{\mathrm{masked}}}[f(\ug)] \\
&= \int_{\ug\upharpoonright}  ~f(\ug)\cdot \p_{\ug\upharpoonright}(\ug) d\ug\\
&= \frac{1}{\kappa} \int_{\ug\upharpoonright} ~f(\ug)\cdot \pdf d\ug.
\end{align}

In the following, we use the notation $\int_{\ug\upharpoonright} \cdot d\ug$ for $\int_{\ug|u_i=0, i\in I_{\mathrm{masked}}} \cdot d\ug$, and $\p_{\ug\upharpoonright}$ for $\p_{\ug|u_i = 0, i\in I_{\mathrm{masked}}}$.

\subsection{Proof for Lemma \ref{lemma1}}
\begin{proof}
Define by $\ut \in \R^{\bar{n}}$ a reduced vector that removes all zero entries from $\ub\in \R^n$. Taking an expectation over $\ub \in \R^n$ results in the same value as taking an expectation over $\ut \in \R^n$. This can be seen as follows: 
\begin{align*}
\E_{\ub}\left[\norm{\ub}^p\right] &= \frac{1}{\kappa}\int_{\ug|u_i=0, i\in I_{\mathrm{masked}}}\norm{\ug}^p e^{-\frac{1}{2}\ug^\top \Sigma\ug} d\ug\\
&= \frac{1}{\kappa}\int_{\ug|u_i=0, i\in I_{\mathrm{masked}}} \left(\sum_{j=1}^{n} \lvert u_j \rvert^p \right) e^{-\frac{1}{2}\ug^\top \Sigma\ug} d\ug\\
&= \frac{1}{\kappa}\sum_{i\in I_{\mathrm{unmasked}}}\left(\int_{u_i} \lvert u_i \rvert^p  e^{-\frac{1}{2}u_i^2} d u_i \right)\\
&= \frac{1}{\kappa}\int_{\ut} \left(\sum_{i=1}^{\bar{n}} \lvert u_i \rvert^p \right) e^{-\frac{1}{2}\ut^\top \tilde{\Sigma}\ut} d\ut\\
&= \frac{1}{\kappa} \int_{\ut}\norm{\ut}^p e^{-\frac{1}{2}\ut^\top \tilde{\Sigma}\ut} d\ut\\
&= \E_{\ut}\left[\norm{\ut}^p\right]
\end{align*}
where $\tilde{\Sigma}$ is a $\bar{n} \times \bar{n}$ unit covariance matrix with $\tilde{\Sigma}=\bm{I}$. 

Let $p \geq 2$ and $\tau \in (0,1)$. Recalling that $\ub \in \R^{n}$, $\ut \in \R^{\bar{n}}$, and $\tilde{\Sigma}\in \R^{\bar{n}\times \bar{n}}$, we have 
\begin{align}
\E_{\ub}\left[\norm{\ub}^p\right] &= \E_{\ut}\left[\norm{\ut}^p\right] \nonumber\\
&=\frac{1}{\sqrt{(2\pi)^{\bar{n}}}} \int_{\ut}\norm{\ut}^p e^{-\frac{\tau + (1-\tau)}{2} \ut^\top \ut} d\ut \nonumber\\
&= \frac{1}{\sqrt{(2\pi)^{\bar{n}}}} \int_{\ut}\norm{\ut}^p e^{-\frac{\tau}{2} \ut^\top \ut}e^{-\frac{1-\tau}{2} \ut^\top \ut} d\ut \nonumber\\
&\leq \frac{1}{\sqrt{(2\pi)^{\bar{n}}}} \int_{\ut} \left(\frac{p}{\tau e}\right)^{\frac{p}{2}} e^{-\frac{1-\tau}{2} \ut^\top \ut} d\ut \label{e1}\\
&= \left(\frac{p}{\tau e}\right)^{\frac{p}{2}} \frac{1}{\sqrt{(2\pi)^{\bar{n}}}}\int_{\ut} e^{-\frac{1}{2} \ut^\top\left(\frac{\tilde{\Sigma}}{1-\tau}\right)^{-1} \ut} d\ut \nonumber\\
&= \left(\frac{p}{\tau e}\right)^{\frac{p}{2}} \frac{1}{\sqrt{(2\pi)^{\bar{n}}}} \sqrt{(2\pi)^{\bar{n}} \cdot \det\left(\frac{\bar{\Sigma}}{1-\tau}\right)}\label{e5}\\
&= \left(\frac{p}{\tau e}\right)^{\frac{p}{2}} (1-\tau)^{-\frac{\bar n}{2}}\label{e2}\\
&\leq \left(\bar{n} + p\right)^{\frac{p}{2}} \label{e3}.
\end{align}
The first inequality \eqref{e1} follows from $t^p e^{-\frac{\tau}{2}t^2} \leq \left(\frac{p}{\tau e}\right)^\frac{p}{2}$ for $ t> 0$. The second inequality \eqref{e3} follows by minimizing $\left(\frac{p}{\tau e}\right)^{\frac{p}{2}} (1-\tau)^{-\frac{n}{2}}$ in $\tau \in (0,1)$. 
Full proofs of these inequalities are given in subsection \ref{misc}.
\end{proof}

\subsection{Proof for Lemma \ref{lemma2}}\label{proof:lemma2}
\begin{proof}

\begin{align}
\nabla_\w f_{\mu}(\w) &= \nabla_\w \E_{\ub}\left[f(\w + \mu\ub)\right] \nonumber\\
&= \nabla_\w \int_{\ug\upharpoonright} \p_{\ug\upharpoonright}(\ug)~ f(\w + \mu\ug) ~d\ug \nonumber\\
&= \int_{\ug\upharpoonright} \nabla_\w ~\p_{\ug\upharpoonright}(\ug)~ f(\w + \mu\ug) ~d\ug \label{interchange}\\
&= \frac{1}{\kappa}\int_{\ug\upharpoonright} \nabla_\w ~ \pdf f(\w + \mu\ug) ~d\ug \nonumber\\
&= \frac{1}{\kappa}\int_{\y\upharpoonright} \nabla_\w ~ e^{-\frac{1}{2}\lVert\frac{\y - \w}{\mu}\rVert^2} f(\y) \frac{1}{\mu^n}~ d\y  \label{substitute_y}\\
&= \frac{1}{\kappa}\int_{\y\upharpoonright} \frac{\y - \w}{\mu^2 } e^{-\frac{1}{2\mu^2}\lVert\y - \w\rVert^2} f(\y) \frac{1}{\mu^n}~ d\y \nonumber\\
&= \frac{1}{\kappa}\int_{\ug\upharpoonright} \frac{\ug}{\mu} e^{-\frac{1}{2}\lVert\ug\rVert^2} f(\w + \mu\ug) ~d\ug \label{substitute_u}\\
&= \E_{\ub}\left[\frac{f(\w + \mu\ub)}{\mu} \ub \right] \nonumber
\end{align}

Since $f\in C^{1,1}$, we can interchange $\nabla$ and the integral in \eqref{interchange}. Furthermore, we substituted $\y = \w + \mu\ug$ in \eqref{substitute_y} and put it back in \eqref{substitute_u}.

Now, we show \eqref{eq:2-point} $\Leftrightarrow$ \eqref{eq:1-point}.
\begin{align}
&\E_{\ub}\left[\frac{f(\w + \mu\ub) - f(\w)}{\mu} \ub \right] \\
&= \frac{1}{\kappa}\int_{\ug\upharpoonright} \frac{f(\w + \mu\ug) - f(\w)}{\mu} \ug ~  \pdf d\ug \nonumber\\
&= \frac{1}{\kappa}\int_{\ug\upharpoonright} \frac{f(\w + \mu\ug)}{\mu} \ug ~  \pdf d\ug  - \frac{f(\w)}{\mu\kappa} \underset{=0}{\underbrace{ \int_{\ug\upharpoonright} \ug ~  \pdf d\ug}} \nonumber\\
&= \E_{\ub}\left[\frac{f(\w + \mu\ub)}{\mu} \ub \right] \label{eq:two-point1}
\end{align}

Lastly, we show \eqref{eq:2-sided} $\Leftrightarrow$ \eqref{eq:1-point}.
The expectation doesn't change even if we shift $\w$ by $\mu\ub$, hence 
\begin{align}
\E_{\ub}\left[\frac{f(\w) - f(\w - \mu\ub)}{\mu} \ub \right] &= \E_{\ub}\left[\frac{f(\w + \mu\ub) - f(\w)}{\mu} \ub \right] \\
&\overset{\eqref{eq:two-point1}}{=} \E_{\ub}\left[\frac{f(\w + \mu\ub)}{\mu} \ub \right].\label{eq:two-point2}
\end{align}
\begin{align*}
&\E_{\ub}\left[\frac{f(\w + \mu\ub) - f(\w - \mu\ub)}{2\mu} \ub \right]\\
&= \frac{1}{2} \left(\E_{\ub}\left[\frac{f(\w + \mu\ub)}{\mu} \ub \underset{=0}{\underbrace{- \frac{f(\w)}{\mu} \ub + \frac{f(\w)}{\mu}}} \ub - \frac{f(\w - \mu\ub)}{\mu} \ub \right]\right)\\
&= \frac{1}{2} \left(\E_{\ub}\left[\frac{f(\w + \mu\ub) - f(\w)}{\mu} \ub \right]  + \E_{\ub}\left[\frac{f(\w) - f(\w - \mu\ub)}{\mu} \ub \right]\right)\\
&\overset{\eqref{eq:two-point2}}{=} \frac{1}{2} \left(\E_{\ub}\left[\frac{f(\w + \mu\ub)}{\mu} \ub \right]  + \E_{\ub}\left[\frac{f(\w + \mu\ub)}{\mu} \ub \right]\right)\\
&= \E_{\ub}\left[\frac{f(\w + \mu\ub)}{\mu} \ub \right]
\end{align*}
\end{proof}

\subsection{Proof for Lemma \ref{lemma3}}
\begin{proof}
\begin{align}
&\norm{\nabla f_\mu(\w) - \nabla f(\w)} \\
&= \left\lVert\frac{1}{\kappa} \int_{\ug\upharpoonright} \left(\frac{f(\w+\mu\ug) - f(\w)}{\mu} - \langle \nabla f(\w),\ug \rangle \right) ~ \ug ~ e^{-\frac{1}{2}\lVert\ug\rVert^2} d\ug \right\rVert\label{d1}\\
&\leq \frac{1}{\kappa\mu}\int_{\ug\upharpoonright} \lvert f(\w+\mu\ug) - f(\w) - \mu\langle \nabla f(\w),\ug \rangle \rvert \norm{\ug} ~ e^{-\frac{1}{2}\lVert\ug\rVert^2} d\ug\label{d2}\\
&\overset{\eqref{def:lipschitz}}{\leq} \frac{\mu L(f)}{2\kappa} \int_{\ug\upharpoonright} \norm{\ug}^3 ~ e^{-\frac{1}{2}\lVert\ug\rVert^2} d\ug \nonumber\\
&= \frac{\mu L(f)}{2} \E_{\ub}\left[\norm{\ub}^3\right] \nonumber\\
&\overset{\eqref{eq:lemma1}}{\leq}\frac{\mu L(f)}{2} (\bar{n} + 3)^{\sfrac{3}{2}} \label{d3}
\end{align}

In the first equality \eqref{d1}, we used $\nabla f(\w) \overset{\eqref{eq:misc3}}{=}\E_{\ub}\left[ \langle \nabla f(\w),\ub \rangle \ub  \right] $\\
$= \frac{1}{\kappa}\int_{\ug\upharpoonright} \langle \nabla f(\w),\ug \rangle \ug ~ e^{-\frac{1}{2}\lVert\ug\rVert^2} d\ug$ (see Lemma \ref{misc3}). The first inequality \eqref{d2} follows because $\left\lVert\int f(x)dx \right\rVert \leq \int \left\lVert f(x) \right\rVert dx$ (the triangle inequality for integrals).

Setting $\ab \leftarrow \nabla f(\w)$ and $\bb \leftarrow \nabla f_\mu(\w)$ in Lemma \ref{lemma0}, we get
\begin{align*}
\norm{\nabla f(\w)}^2 &\overset{\eqref{eq:lemma0}}{\leq} 2\norm{\nabla f(\w) - \nabla f_\mu(\w)}^2 + 2\norm{\nabla f_\mu(\w)}^2\\
&\overset{\eqref{d3}}{\leq} \frac{\mu^2 L^2(f)}{2} (\bar{n} + 3)^3 + 2\norm{\nabla f_\mu(\w)}^2\\
&\leq \frac{\mu^2 L^2(f)}{2} (\bar{n} + 4)^3 + 2\norm{\nabla f_\mu(\w)}^2
\end{align*}
\end{proof}

\subsection{Proof for Theorem \ref{theorem1}}
\begin{proof}

\begin{align}
f_\mu(\w) - f(\w) &= \E_{\ub}\left[f(\w + \mu\ub)\right] - f(\w)\nonumber\\
&= \E_{\ub}\left[f(\w + \mu\ub) - f(\w) -\mu\langle \nabla f(\w), \ub\rangle\right]\label{c1}\\
&= \frac{1}{\kappa}\int_{\ug\upharpoonright}\left[f(\w + \mu\ug) - f(\w) -\mu\langle \nabla f(\w), \ug\rangle\right] \pdf d\ug\nonumber\\
&\overset{\eqref{def:lipschitz}}{\leq} \frac{1}{\kappa} \int_{\ug\upharpoonright} \frac{\mu^2 L(f)}{2}\norm{\ug}^2 \pdf d\ug\nonumber\\
&= \frac{\mu^2L(f)}{2}\E_{\ub}\left[\norm{\ub}^2\right] \nonumber\\
&\overset{\eqref{eq:lemma1_p_2}}{\leq} \frac{\mu^2 L(f)}{2}\bar{n}\label{a0}
\end{align}
The second equality \eqref{c1} follows because $\E_{\ub}\left[\ub\right] = \bm{0}$. Moreover, \eqref{a0} doesn't change even if we shift $\w$ by $\mu\ub$, therefore we have
\begin{align}
&\left[(f_\mu(\w) - f(\w)) - (f_\mu(\w + \mu\ug) - f(\w + \mu\ug))\right]^2\\ &\overset{\eqref{eq:misc}}{\leq} 2\left[f_\mu(\w) - f(\w)\right]^2 + 2\left[f_\mu(\w + \mu\ug) - f(\w + \mu\ug)\right]^2 \nonumber\\
&\overset{\eqref{a0}}{\leq} \frac{\mu^4 L^2(f)}{2}\bar{n}^2 + \frac{\mu^4 L^2(f)}{2}\bar{n}^2 \nonumber\\
&= \mu^4 L^2(f)\bar{n}^2\label{a1}
\end{align}

Setting $\ab \leftarrow f_\mu(\w + \mu\ug) - f_\mu(\w)$ and $\bb \leftarrow \mu \langle \nabla f_\mu(\w), \ug \rangle$ in Lemma \ref{lemma0}, we get
\begin{align}
&\left[f_\mu(\w + \mu\ug) - f_\mu(\w)\right]^2\\
&\overset{\eqref{eq:lemma0}}{\leq} 2\left[f_\mu(\w + \mu\ug) - f_\mu(\w) - \mu \langle \nabla f_\mu(\w), \ug \rangle\right]^2 + 2\left[\mu \langle \nabla f_\mu(\w), \ug \rangle\right]^2\nonumber\\
&\overset{\eqref{def:lipschitz}}{\leq} \frac{\mu^4 L^2(f_\mu)}{2}\norm{\ug}^4 + 2\mu^2 \langle \nabla f_\mu(\w), \ug \rangle ^2 \nonumber\\
&\leq \frac{\mu^4 L^2(f)}{2}\norm{\ug}^4 + 2\mu^2 \norm{\nabla f_\mu(\w)}^2\norm{\ug}^2\label{a2}
\end{align}
The last inequality follows because $L(f_\mu)<L(f)$ and the Cauchy-Schwarz inequality $\langle \ab, \bb \rangle^2 \leq \norm{\ab}^2 \norm{\bb}^2$.

Again, setting $\ab \leftarrow f(\w + \mu\w) - f(\w)$ and $\bb \leftarrow f_\mu(\w + \mu\w) - f_\mu(\w)$ in Lemma \ref{lemma0}, we obtain
\begin{align}
&\left[f(\w + \mu\ug) - f(\w)\right]^2\\
&\overset{\eqref{eq:lemma0}}{\leq} 2\left[(f_\mu(\w) - f(\w)) - (f_\mu(\w + \mu\ug) - f(\w + \mu\ug))\right]^2 + 22\left[f_\mu(\w + \mu\ug) - f_\mu(\w)\right]^2 \nonumber\\
&\overset{\eqref{a1},\eqref{a2}}{\leq} 2\mu^4 L^2(f)\bar{n}^2 + \mu^{4} L^2(f)\norm{\ug}^4 + 4\mu^2 \norm{\nabla f_\mu(\w)}^2\norm{\ug}^2 \label{a3}
\end{align}

Now, we evaluate the expectation of $\norm{g_\mu(\w)}^2$ wrt. $\x$ and $\ub$.
\begin{align}
\E_{\ub, \x}\left[\norm{g_\mu(\w)}^2\right] &\overset{\eqref{def:g_mu}}{=} \E_{\ub}\left[\left\lVert\frac{f(\w+\mu\ub) - f(\w)}{\mu}\ub\right\rVert^2\right] \nonumber\\
&= \E_{\ub}\left[\frac{1}{\mu^2}\left[ f(\w+\mu\ub) - f(\w)\right]^2 \cdot \norm{\ub}^2\right] \\
&\overset{(\ref{a3})}{\leq} \E_{\ub}\left[2\mu^{2} L^2(f)\bar{n}^2\norm{\ub}^2 + \mu^{2} L^2(f)\norm{\ub}^6 + 4\norm{\nabla f_\mu(\w)}^2 \norm{\ub}^4\right] \nonumber\\
&\overset{\eqref{eq:lemma1}}{\leq} 2\mu^2 L^2(f)\bar{n}^3 + \mu^2 L^2(f)(\bar{n}+6)^3 + 4(\bar{n}+4)^2\norm{\nabla f_\mu(\w)}^2 \nonumber\\
&\leq 3\mu^2 L^2(f)(\bar{n}+4)^3 + 4(\bar{n}+4)^2\norm{\nabla f_\mu(\w)}^2 \label{eq:lemma4}
\end{align}
The last inequality follows because $2\bar{n}^3 + (\bar{n} + 6)^3 \leq 3(\bar{n}+4)^3$.

From the Lipschitz-smoothness assumption \eqref{def:lipschitz} of $f_\mu$, we have
\begin{align}
f_\mu(\w^{(t+1)}) &\overset{\eqref{def:lipschitz}}{\leq} f_\mu(\w^{(t)}) + \langle \nabla f_\mu(\w^{(t)}), \w^{(t+1)}-\w^{(t)} \rangle + \frac{L(f_\mu)}{2}\norm{\w^{(t)} - \w^{(t+1)}}^2 \nonumber\\
&= f_\mu(\w^{(t)}) - h^{(t)}\langle \nabla f_\mu(\w^{(t)}), g_\mu(\w^{(t)}) \rangle + \frac{ (h^{(t)})^2 L(f_\mu)}{2}\norm{g_\mu(\w^{(t)})}^2 \label{a4}
\end{align}
In \eqref{a4}, we used the update rule $\w^{(t+1)} = \w^{(t)} - h^{(t)} g_\mu(\w^{(t)})$.
Taking the expectation wrt. $\x^{(t)}$ and $\ub^{(t)}$, we have
\begin{align*}
&\E_{\ub, \x}\left[f_\mu(\w^{(t+1)})\right]\\
&\leq \E_{\ub, \x}\left[f_\mu(\w^{(t)}) \right] - h^{(t)}\E_{\ub, \x}\left[\norm{\nabla f_\mu(\w^{(t)})}^2\right] + \frac{ (h^{(t)})^2 L(f_\mu)}{2}\E_{\ub, \x}\left[\norm{g_\mu(\w^{(t)})}^2 \right]\\
&\overset{\eqref{eq:lemma4}}{\leq} \E_{\ub, \x}\left[f_\mu(\w^{(t)}) \right] - h^{(t)}\E_{\ub, \x}\left[\norm{\nabla f_\mu(\w^{(t)})}^2\right] \nonumber\\
&\hspace{2em}+ \frac{(h^{(t)})^2 L(f)}{2}\left(4(\bar{n}^{(t)}+4) \E_{\ub, \x}\left[\norm{\nabla f_\mu(\w^{(t)})}^2\right] + 3\mu^2 L^2(f)(\bar{n}^{(t)}+4)^3 \right)
\end{align*}

Choosing $h^{(t)} = \frac{1}{4(\bar{n}^{(t)}+4)L(f)}$, we have
\begin{align}
\E_{\ub, \x}\left[f_\mu(\w^{(t+1)})\right]
&\leq \E_{\ub, \x}\left[f_\mu(\w^{(t)}) \right] - \frac{1}{8(\bar{n}^{(t)}+4)L(f)}\E_{\ub, \x}\left[\norm{\nabla f_\mu(\w^{(t)})}^2\right]\\
&\hspace{10em}+ \frac{3\mu^2}{32}L(f)(\bar{n}^{(t)}+4) \label{b1}
\end{align}

Recursively applying the inequalities above moving the index from $T+1$ to $0$, rearranging the terms and noting that $f_\mu^\star \leq f_\mu(w^{(t)})$ and $\hat{h} = \frac{1}{4(\hat{n}+4)L}$ where $L(f)\leq L$ for all $f(\w^{(t)})$ results in
\begin{align}
\E_{\bar{\mathcal{U}}, \X}\left[\norm{\nabla f_\mu(\w^{(T)})}^2\right] &\leq 8(\hat{n}+4)L\left[\frac{f_\mu(\w^{(0)})- f_\mu^\star}{T+1} + \frac{3\mu^2}{32}L(\hat{n}+4)\right]\label{b2}
\end{align}
Thus, we obtain
\begin{align}
\E_{\bar{\mathcal{U}}, \X}\left[\norm{\nabla f(\w^{(T)})}^2\right] &\overset{\eqref{eq:lemma3}}{\leq} \frac{\mu^2 L^2}{2} (\hat{n} + 4)^3 + 2\E_{\bar{\mathcal{U}}, \X}\left[\norm{\nabla f_\mu(\w^{(T)})}^2\right] \nonumber\\
&\overset{\eqref{b2}}{\leq} 16(\hat{n}+4)L\frac{f_\mu(\w^{(0)})- f_\mu^\star}{T+1} + \frac{\mu^2 L^2}{2} (\hat{n} + 4)^2\left(\hat{n} + \frac{11}{2}\right) \label{b3}
\end{align}
Now we lower-bound the expected number of iteration. In order to get $\epsilon$-accurate solution $\E_{\bar{\mathcal{U}}, \X}\left[\norm{\nabla f(\w^{(T)})}^2\right] \leq \epsilon^2$, we need to choose $\mu = \Omega\left(\frac{\epsilon}{\hat{n}^{\sfrac{3}{2}}L}\right)  \cdots \refstepcounter{equation}(\theequation)\label{b4}$ so that the second term in the right hand side of \eqref{b3} vanishes wrt. $\epsilon^2$. 
\begin{align*}
16(\hat{n}+4)L\frac{f_\mu(\w^{(0)})- f_\mu^\star}{T+1} + \bigO(\mu^2 L^2 \hat{n}^3) &\overset{\eqref{b4}}{=} 16(\hat{n}+4)L\frac{f_\mu(\w^{(0)})- f_\mu^\star}{T+1} + \bigO(\epsilon^2)\\ &\overset{!}{=} \bigO(\epsilon^2) \\
T &= \bigO\left(\frac{\hat{n}L}{\epsilon^2}\right)
\end{align*}

\end{proof}

\subsection{Miscellaneous}\label{misc}
\begin{lemma}\label{lemma0}
For any $\ab, \bb \in \mathbb{R}^{n}$, we have
\begin{align}
    \norm{\ab}^2 &\leq 2\norm{\ab - \bb}^2 + 2\norm{\bb}^2 \label{eq:lemma0}
\end{align}
\end{lemma}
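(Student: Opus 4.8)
The plan is to reduce the claim to the elementary fact that $\norm{\x + \y}^2 \leq 2\norm{\x}^2 + 2\norm{\y}^2$ for all $\x, \y \in \R^{n}$, and then substitute $\x = \ab - \bb$ and $\y = \bb$, which yields $\x + \y = \ab$ and hence exactly the asserted bound \eqref{eq:lemma0}. So the only real content is the auxiliary two-vector inequality, which I will prove first and then apply.

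To establish the auxiliary inequality, I would start from the triangle inequality $\norm{\x + \y} \leq \norm{\x} + \norm{\y}$, square both sides to obtain $\norm{\x+\y}^2 \leq \norm{\x}^2 + 2\norm{\x}\norm{\y} + \norm{\y}^2$, and then control the cross term by the AM-GM inequality $2\norm{\x}\norm{\y} \leq \norm{\x}^2 + \norm{\y}^2$. Adding these two estimates gives $\norm{\x+\y}^2 \leq 2\norm{\x}^2 + 2\norm{\y}^2$. An equivalent, even shorter route is to invoke the parallelogram law $\norm{\x+\y}^2 + \norm{\x-\y}^2 = 2\norm{\x}^2 + 2\norm{\y}^2$ and simply discard the nonnegative term $\norm{\x-\y}^2$; either derivation is a one-liner.

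There is essentially no obstacle here: the statement is a textbook consequence of the convexity of $t \mapsto t^2$ (equivalently, of the triangle inequality combined with AM-GM), valid in any inner product space and in particular in $\R^{n}$ with the L2-norm used throughout the paper. The only thing that needs care is the bookkeeping of the substitution, so I would explicitly verify that with $\x \leftarrow \ab - \bb$ and $\y \leftarrow \bb$ the left-hand side collapses to $\norm{\ab}^2$ while the right-hand side becomes $2\norm{\ab-\bb}^2 + 2\norm{\bb}^2$, which completes the proof.
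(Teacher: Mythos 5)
Your proposal is correct and matches the paper's proof essentially verbatim: the paper also establishes $\norm{\x+\y}^2 \leq 2\norm{\x}^2 + 2\norm{\y}^2$ via the parallelogram law (the second route you mention) and then substitutes $\x \leftarrow \ab - \bb$, $\y \leftarrow \bb$. Nothing further is needed.
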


\begin{proof}
For any $\x, \y \in \mathbb{R}^{n}$, it holds
\begin{align*}
\norm{\x + \y}^2 + \norm{\x - \y}^2 
&= \norm{\x}^2 + 2\langle\x, \y \rangle + \norm{\y}^2 + \norm{\x}^2 - 2\langle\x, \y \rangle + \norm{\y}^2\\
&= 2\norm{\x}^2 + 2\norm{\y}^2 .
\end{align*}
Dropping either $\norm{\x + \y}^2$ or $\norm{\x - \y}^2$ on the left hand side, we get
\begin{align}
\norm{\x \pm \y}^2 &\leq 2\norm{\x}^2 + 2\norm{\y}^2 \label{eq:misc}
\end{align}
Substitute $\x \leftarrow \ab - \bb$ and $\y \leftarrow \bb$ in $\norm{\x + \y}^2 \leq 2\norm{\x}^2 + 2\norm{\y}^2$. Then we get
\begin{align*}
\norm{\ab}^2 &\leq 2\norm{\ab - \bb}^2 + 2\norm{\bb}^2 .
\end{align*}
\end{proof}

\begin{lemma}\label{misc1}
Let be $t>0$, $p\geq 2$, and $\tau \in (0, 1)$. Then we have
\begin{align}
t^p e^{-\frac{\tau}{2}t^2} \leq \left(\frac{p}{\tau e}\right)^\frac{p}{2}. \label{eq:misc1}
\end{align}
\end{lemma}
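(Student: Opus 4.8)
The plan is to reduce \eqref{eq:misc1} to the elementary single-variable bound $x^{k}e^{-x}\le (k/e)^{k}$, valid for all $x>0$ and $k>0$, applied with $k=p/2$ and $x=\tfrac{\tau}{2}t^{2}$.

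First I would change variables. Setting $x:=\tfrac{\tau}{2}t^{2}>0$ gives $t^{2}=2x/\tau$, hence $t^{p}=(t^{2})^{p/2}=(2x/\tau)^{p/2}$, so that
\begin{align*}
t^{p}e^{-\frac{\tau}{2}t^{2}}=\left(\frac{2}{\tau}\right)^{p/2}x^{p/2}e^{-x}.
\end{align*}
Thus it suffices to prove $x^{p/2}e^{-x}\le\left(\tfrac{p}{2e}\right)^{p/2}$, since multiplying through by $(2/\tau)^{p/2}$ then produces exactly $\left(\tfrac{p}{\tau e}\right)^{p/2}$ on the right.

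Next I would establish the auxiliary inequality $x^{k}e^{-x}\le (k/e)^{k}$ for $x>0$, $k>0$, which is the crux. Writing $x=ku$ with $u=x/k>0$, the left-hand side equals $k^{k}\bigl(ue^{-u}\bigr)^{k}$, so the claim reduces to $ue^{-u}\le e^{-1}$, i.e. $u\le e^{u-1}$; this is immediate from the standard bound $1+y\le e^{y}$ taken at $y=u-1$. (Alternatively, one may argue by calculus: $\phi(x)=k\ln x-x$ has $\phi'(x)=k/x-1$ vanishing only at $x=k$, with $\phi''<0$, so $\phi$ attains its global maximum $k\ln k-k$ there, giving $x^{k}e^{-x}\le(k/e)^{k}$.) Applying this with $k=p/2$ completes the proof.

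There is no genuine obstacle here; the statement is an elementary maximization. The only care required is bookkeeping the substitution and the half-integer exponents $p/2$ so that the constants match exactly, and observing that only $p>0$ and $\tau>0$ are actually used, so the hypotheses $p\ge 2$ and $\tau\in(0,1)$ are more than sufficient.
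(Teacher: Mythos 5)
Your proof is correct, but it takes a genuinely different route from the paper. The paper works directly with $\psi(t)=t^{p}e^{-\frac{\tau}{2}t^{2}}$: it solves $\psi'(t)=0$ to find the critical point $t=\sqrt{\sfrac{p}{\tau}}$, evaluates $\psi$ there to get $\left(\frac{p}{\tau e}\right)^{p/2}$, and runs a second-derivative test to confirm a maximum. You instead substitute $x=\frac{\tau}{2}t^{2}$ to strip out $\tau$ entirely, reducing the claim to the universal bound $x^{k}e^{-x}\leq(k/e)^{k}$, which you then prove from $1+y\leq e^{y}$. Your reduction buys two things: the computation is lighter (no second-derivative bookkeeping), and the argument via $ue^{-u}\leq e^{-1}$ certifies the \emph{global} maximum outright, whereas the paper's second-derivative test only establishes a local maximum --- globality there is implicit, resting on the facts that the critical point is unique on $(0,\infty)$ and that $\psi$ vanishes at both ends of the domain. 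The paper's direct approach, on the other hand, makes the optimizing value of $t$ explicit, which mirrors the structure of the companion Lemma~\ref{misc2} where the optimal $\tau$ is needed. Your observation that only $t>0$, $p>0$, $\tau>0$ are used is accurate; the stronger hypotheses are inherited from the context in which the lemma is applied.
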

\begin{proof}
Denote $\psi(t):= t^p e^{-\frac{\tau}{2}t^2}$ for some fixed $\tau\in(0,1)$. Find the point $t$ s.t. $\psi^{'}(t) = 0$.
\begin{align*}
\psi'(t) = pt^{p-1} e^{-\frac{\tau}{2}t^2} + t^p\left(-\tau t  e^{-\frac{\tau}{2}t^2}\right) &\overset{!}{=} 0\\
t&=\sqrt{\sfrac{p}{\tau}} & \text{ since }t>0, \tau\in(0,1)\\
\psi(\sqrt{\sfrac{p}{\tau}})&= \left(\sfrac{p}{\tau}\right)^\frac{p}{2} e^{-\sfrac{p}{2}}\\
&= \left(\frac{p}{\tau e}\right)^\frac{p}{2}
\end{align*}
Now we conduct the second derivative test.
\begin{align*}
\psi''(t) &= \left(\tau^2 t^2 e^{-\frac{\tau}{2} t^2} - \tau e^{-\frac{\tau}{2} t^2}\right) t^p + (p - 1) p e^{-\frac{\tau}{2} t^2} t^{p - 2} - 2 \tau p e^{-\frac{\tau}{2} t^2} t^p\\
&= e^{-\frac{\tau}{2} t^2} \cdot t^{p-2} \left(\tau^2 t^4 -2\tau p t^2 -\tau t^2 + p^2 -p\right)
\end{align*}
\begin{align*}
\psi''(\sqrt{\sfrac{p}{\tau}}) &= e^{-\frac{\bcancel{\tau}}{2}\frac{p}{\bcancel{\tau}}} \cdot \sqrt{\frac{p}{\tau}}^{p-2} \left(\bcancel{\tau^2} \left(\frac{p}{\bcancel{\tau}}\right)^2 -2\bcancel{\tau} p \frac{p}{\bcancel{\tau}} -\bcancel{\tau} \frac{p}{\bcancel{\tau}} + p^2 -p \right)\\
&= \underset{> 0}{\underbrace{\vphantom{\int}e^{-\frac{p}{2}}}} \underset{> 0}{\underbrace{\vphantom{\int}\left(\frac{p}{\tau}\right)^{\frac{p-2}{2}}}} \underset{< 0}{\left(\underbrace{\vphantom{\int}\bcancel{p^2} - \bcancel{2p^2} -p + \bcancel{p^2} -p}\right)} < 0 & \text{ since } p\geq 2
\end{align*}
Hence, we conclude that we have found a local maximum at $t = \sqrt{\sfrac{p}{\tau}}$, therefore $t^p e^{-\frac{\tau}{2}t^2} \leq \left(\frac{p}{\tau e}\right)^\frac{p}{2}$.
\end{proof}

\begin{lemma}\label{misc2}
Let be $t>0$, $p\geq 2$, $\tau \in (0, 1)$, and $n > 0$. Then we have
\begin{align}
\left(\frac{p}{\tau e}\right)^{\frac{p}{2}} (1-\tau)^{-\frac{n}{2}} \leq \left(p+n\right)^{\frac{p}{2}}\label{eq:misc2}
\end{align}
\end{lemma}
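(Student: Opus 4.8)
The plan is to prove the bound \eqref{eq:misc2} by pinning down the value of $\tau$ at which it is applied. In the only place the lemma is used, namely the passage from \eqref{e2} to \eqref{e3} in the proof of Lemma~\ref{lemma1}, $\tau$ is a free parameter that is chosen to make the factor $(p/(\tau e))^{p/2}(1-\tau)^{-n/2}$ as small as possible, so I would establish \eqref{eq:misc2} at that decisive $\tau$. To strip away the common exponent, I would first raise both (positive) sides to the power $2/p$, reducing the claim to
\begin{align*}
\frac{p}{\tau\,e}\,(1-\tau)^{-n/p} \leq p+n .
\end{align*}

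The decisive value is $\tau=\tfrac{p}{p+n}$, which lies in $(0,1)$ for every $p\geq 2$ and $n>0$. That this is the right choice follows either by setting the derivative of $\tau^{p}(1-\tau)^{n}$ to zero, or more cleanly by weighted AM--GM applied to $\tau+(1-\tau)=1$ with weights $\tfrac{p}{p+n}$ and $\tfrac{n}{p+n}$, which shows that $\tau^{p}(1-\tau)^{n}$ is largest exactly there and hence that $\tau^{-p/2}(1-\tau)^{-n/2}$ is smallest there. Substituting this $\tau$ gives $\tfrac{p}{\tau}=p+n$ and $1-\tau=\tfrac{n}{p+n}$, so the left-hand side above collapses to $\tfrac{p+n}{e}\bigl(1+\tfrac{p}{n}\bigr)^{n/p}$. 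After dividing through by $p+n$, the target reduces to the single scalar inequality $\bigl(1+\tfrac{p}{n}\bigr)^{n/p}\leq e$, equivalently $\bigl(1+\tfrac{p}{n}\bigr)^{n}\leq e^{p}$.

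This last inequality is where the real content sits, and it follows at once from the elementary estimate $1+x\leq e^{x}$ taken at $x=\tfrac{p}{n}$ and raised to the $n$-th power. The main obstacle is therefore not the algebra but the two bookkeeping points that precede it: confirming that the critical value $\tau=\tfrac{p}{p+n}$ genuinely minimizes the left-hand side rather than being a saddle point or a boundary artifact (which a second-derivative test, or the strictness in weighted AM--GM, settles), and verifying that it stays strictly inside $(0,1)$ for all admissible $p$ and $n$. Once those are in place, the reduction to $\bigl(1+\tfrac{p}{n}\bigr)^{n}\leq e^{p}$ and the cancellation of the resulting factor $e^{-p/2}$ against $\bigl(1+\tfrac{p}{n}\bigr)^{n/2}\leq e^{p/2}$ leave exactly $(p+n)^{p/2}$, as required.
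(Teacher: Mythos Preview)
Your proposal is correct and follows essentially the same route as the paper: choose the optimizing value $\tau=\tfrac{p}{p+n}$, evaluate the left-hand side there as $e^{-p/2}(p+n)^{p/2}\bigl(1+\tfrac{p}{n}\bigr)^{n/2}$, and finish via $\bigl(1+\tfrac{p}{n}\bigr)^{n}\le e^{p}$. Your justification that this $\tau$ is a \emph{minimizer} (via weighted AM--GM or log-convexity) is in fact cleaner than the paper's explicit second-derivative computation, which contains a sign slip and mislabels the critical point a local maximum; your reading of the lemma as an existential-in-$\tau$ bound is also the correct one for its use in Lemma~\ref{lemma1}.
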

\begin{proof}
Denote $\phi(\tau):= \left(\frac{p}{\tau e}\right)^{\frac{p}{2}} (1-\tau)^{-\frac{n}{2}}$ for some fixed $\tau\in(0,1)$. Find the point $\tau$ s.t. $\phi'(\tau) = 0$.
\begin{align*}
\phi'(\tau)=\frac{ e^{-\frac{p}{2}} (1 - \tau)^{-\frac{n}{2} - 1} (\frac{p}{\tau})^{\frac{p}{2}} (n \tau + p (\tau - 1))}{2 \tau} &\overset{!}{=} 0\\
\tau&=\frac{p}{n+p} & \text{ since }n+p \neq 0, np \neq 0\\
\end{align*}
Let us check the sign of the second derivative of $\phi(\frac{p}{n+p})$.
\begin{align*}
\phi''(\tau)&=\frac{1}{4(\tau-1)^2\tau^2} e^{-\frac{p}{2}}\left(\frac{p}{\tau}\right)^{\frac{p}{2}} (1-\tau)^{-\frac{n}{2}} \left(\tau^2((p+ n)^2 + 2(p + n))\right. \\
&\hspace{20em}\left.- 2p \tau(p  + n) - 4p\tau + 2p\right)\\
\phi''(\frac{p}{n+p})&= \underset{>0}{\underbrace{\vphantom{\int}\frac{(n+p)^4}{4n^2p^2}}} ~ \underset{>0}{\underbrace{\vphantom{\int}e^{-\frac{p}{2}}}} ~\underset{>0}{\underbrace{\vphantom{\int}(n+p)^{\frac{p}{2}}}} ~\underset{>0}{\underbrace{\vphantom{\int}\left(\frac{n+p}{n}\right)^{\frac{n}{2}}}} ~\underset{<0}{\left(\underbrace{\vphantom{\int} -p^2 \left( 1+\frac{2}{p+n}\right) +2p}\right)} < 0
\end{align*}
since $p \geq 2, n > 0$. Therefore, $\phi(\tau)$ takes a local maximum at $\tau = \frac{p}{n+p}$. Then we obtain
\begin{align*}
\phi(\frac{p}{n+p})&= e^{-\frac{p}{2}} \left(p+n\right)^{\frac{p}{2}} \left(\frac{n+p-p}{n+p}\right)^{-\frac{n}{2}} \leq \left(p+n\right)^{\frac{p}{2}} .
\end{align*}
The last inequality follows because
\begin{align*}
e^{-\frac{p}{2}}\left(\frac{n}{n+p}\right)^{-\frac{n}{2}} &\leq 1 .
\end{align*}
\end{proof}

\begin{lemma}\label{misc3}
\begin{align}
\nabla f(\w) = \E_{\ug}\left[ \langle \nabla f(\w),\ug \rangle \ug  \right] \label{eq:misc3}
\end{align}
\end{lemma}
\begin{proof}
Let us denote the gradients $\nabla f(\w)$ wrt. $\w$ as a column vector of derivatives of each component in $\w$.
\begin{align*}
\nabla f(\w) = \left[\frac{\partial f}{\partial w_1}, \frac{\partial f}{\partial w_2}, \cdots \frac{\partial f}{\partial w_n}\right]^\top
\end{align*}
By definition of the scalar product, we have
\begin{align}
\langle \nabla f(\w),\ug \rangle = \sum_{j=1}^{n}\frac{\partial f}{\partial w_j} u_j \label{eq:g1}
\end{align}
Then we get
\begin{align}
\E_{\ug}\left[ \langle \nabla f(\w),\ug \rangle \ug  \right] &\overset{\eqref{eq:g1}}{=} \E_{\ug}\left[ \left(\sum_{j=1}^{n}\frac{\partial f}{\partial w_j} u_j\right) \ug\right] \nonumber\\
&= \sum_{j=1}^{n}\E_{\ug}\left[ \left(\frac{\partial f}{\partial w_j} u_j\right) \ug\right]\label{eq:g2}\\
&= \sum_{j=1}^{n}\frac{\partial f}{\partial w_j}\E_{\ug}\left[  u_j \cdot\ug\right] \nonumber\\
&=\sum_{j=1}^{n}\frac{\partial f}{\partial w_j}
\left[ \begin{array}{c}
\E \left[u_j \cdot u_1\right] \\
\E \left[u_j \cdot u_2\right] \\
\vdots \\
\E \left[u_j \cdot u_j\right]\\
\vdots \\
\E \left[u_j \cdot u_n\right] \\
\end{array}\right] \nonumber\\
&=\sum_{j=1}^{n}\frac{\partial f}{\partial w_j}
\left[ \begin{array}{c}
\E \left[u_j\right] \cdot \E \left[u_1\right] \\
\E \left[u_j\right] \cdot \E \left[u_2\right] \\
\vdots \\
\E \left[u_j^2\right]\\
\vdots \\
\E \left[u_j\right] \cdot \E \left[u_n\right] \\
\end{array}\right] \label{eq:g3}
\end{align}
We used the linearity of expectation in \eqref{eq:g2}. The last equality \eqref{eq:g3} follows because each component $u$ is independent. Since $u_j$ was drawn from $\N(0, 1)$, we know $\E[u_j] = 0$, and $\E[(u_j - 0)^2] = 1$. Then we obtain
\begin{align}
\E_{\ug}\left[ \langle \nabla f(\w),\ug \rangle \ug \right] \overset{\eqref{eq:g3}}{=} \sum_{j=1}^{n}\frac{\partial f}{\partial w_j} 
\left[ \begin{array}{c}
0 \\
0 \\
\vdots \\
1\\
\vdots \\
0 \\
\end{array}\right] 
= \left[ \begin{array}{c}
\frac{\partial f}{\partial w_1} \\
\frac{\partial f}{\partial w_2} \\
\vdots \\
\frac{\partial f}{\partial w_j}\\
\vdots \\
\frac{\partial f}{\partial w_n} \\
\end{array}\right]
= \nabla f(\w)
\end{align}

\end{proof}

\subsection{Empirical Sparsity}\label{sparsity}

Figures \ref{fig:sparsity-cifar} and \ref{fig:sparsity-mnist} illustrate our observation of empirical gradient sparsity across the entire learning process on both MNIST and CIFAR classification tasks. The histograms on the right side show that the majority of gradient coordinates have value zero throughout the learning process, while the values of the learned weight vector are centered around non-zero values, except obviously for pruning approaches.

\begin{figure}
    \centering
\begin{tabular}{ccc}
     \rotatebox[origin=t]{90}{dense}& \makecell{\includegraphics[trim={3em 0 0 0},clip,width=0.45\textwidth]{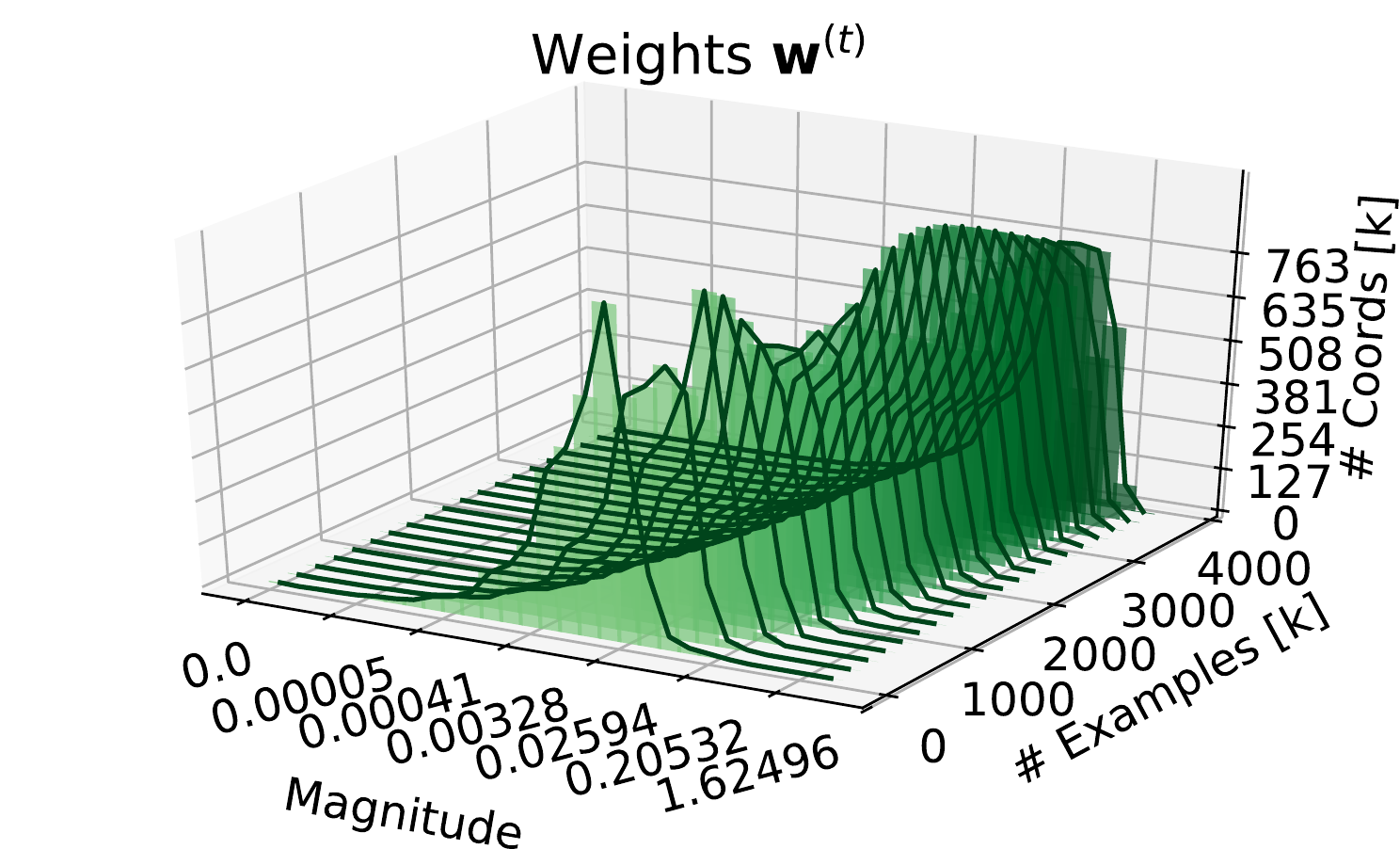}}&\makecell{\includegraphics[trim={3em 0 0 0},clip,width=0.45\textwidth]{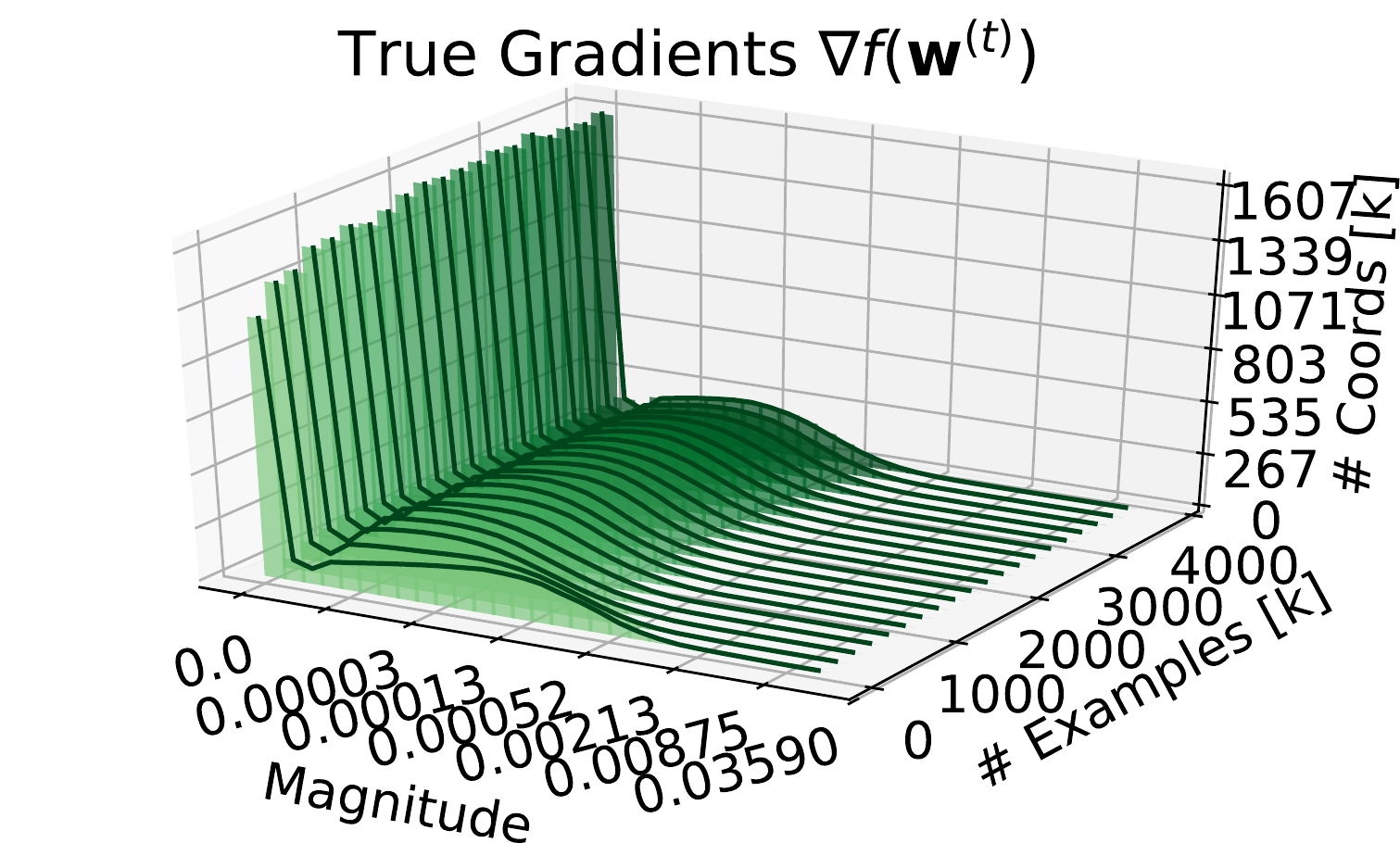}}\\
     \rotatebox[origin=t]{90}{prune-L1}& \makecell{\includegraphics[trim={3em 0 0 0},clip,width=0.45\textwidth]{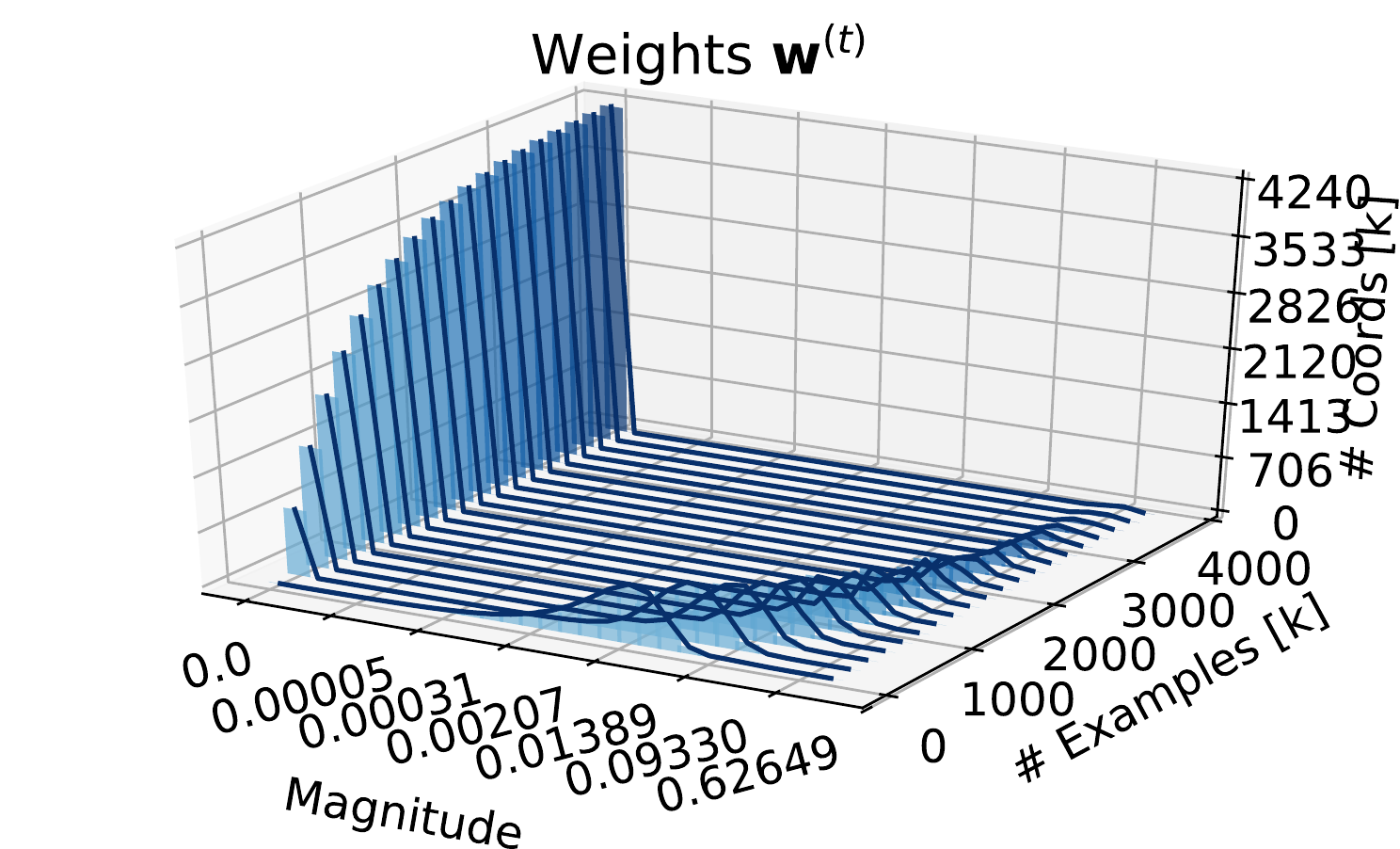}}&\makecell{\includegraphics[trim={3em 0 0 0},clip,width=0.45\textwidth]{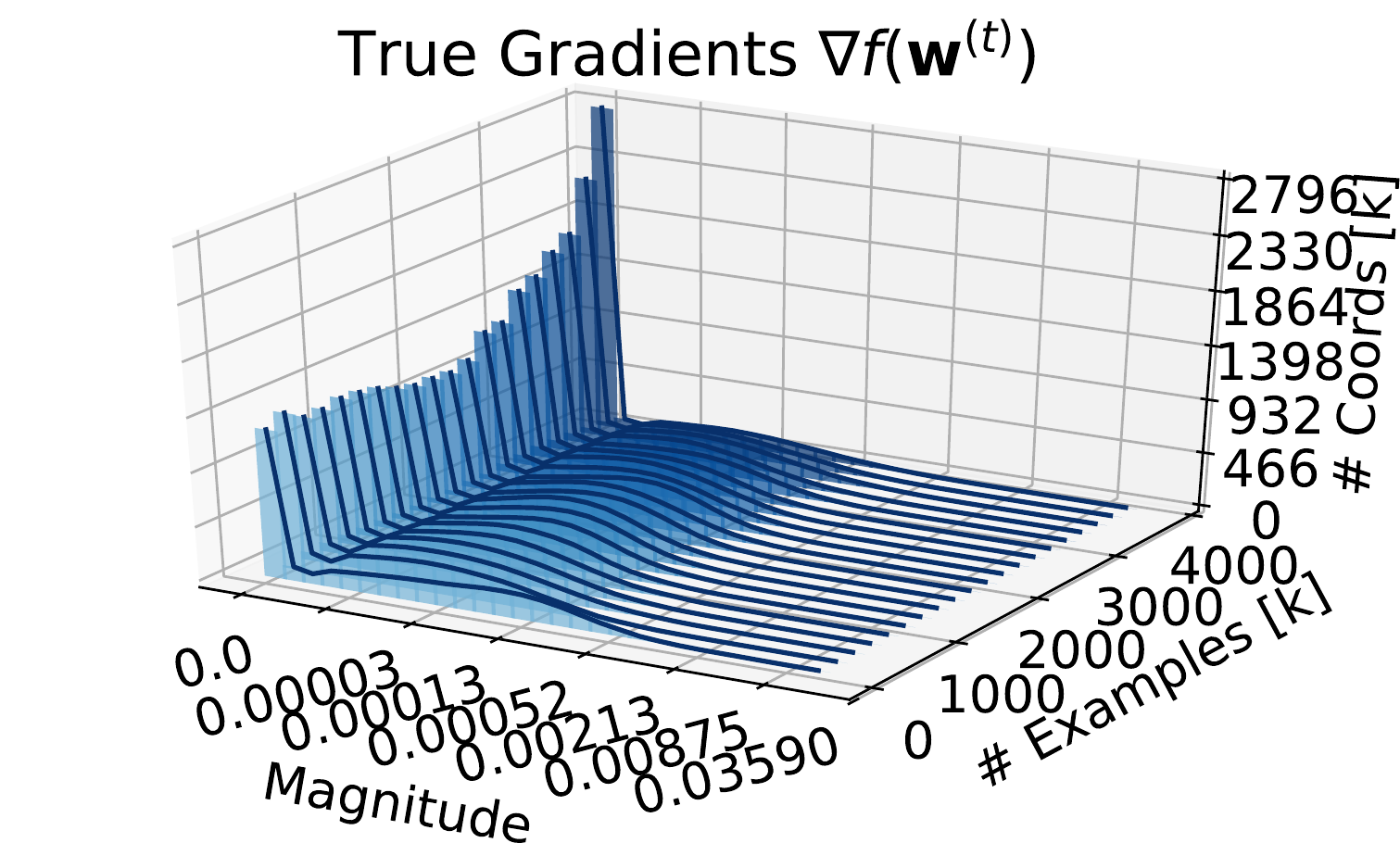}}\\
     \rotatebox[origin=t]{90}{prune-random}& \makecell{\includegraphics[trim={3em 0 0 0},clip,width=0.45\textwidth]{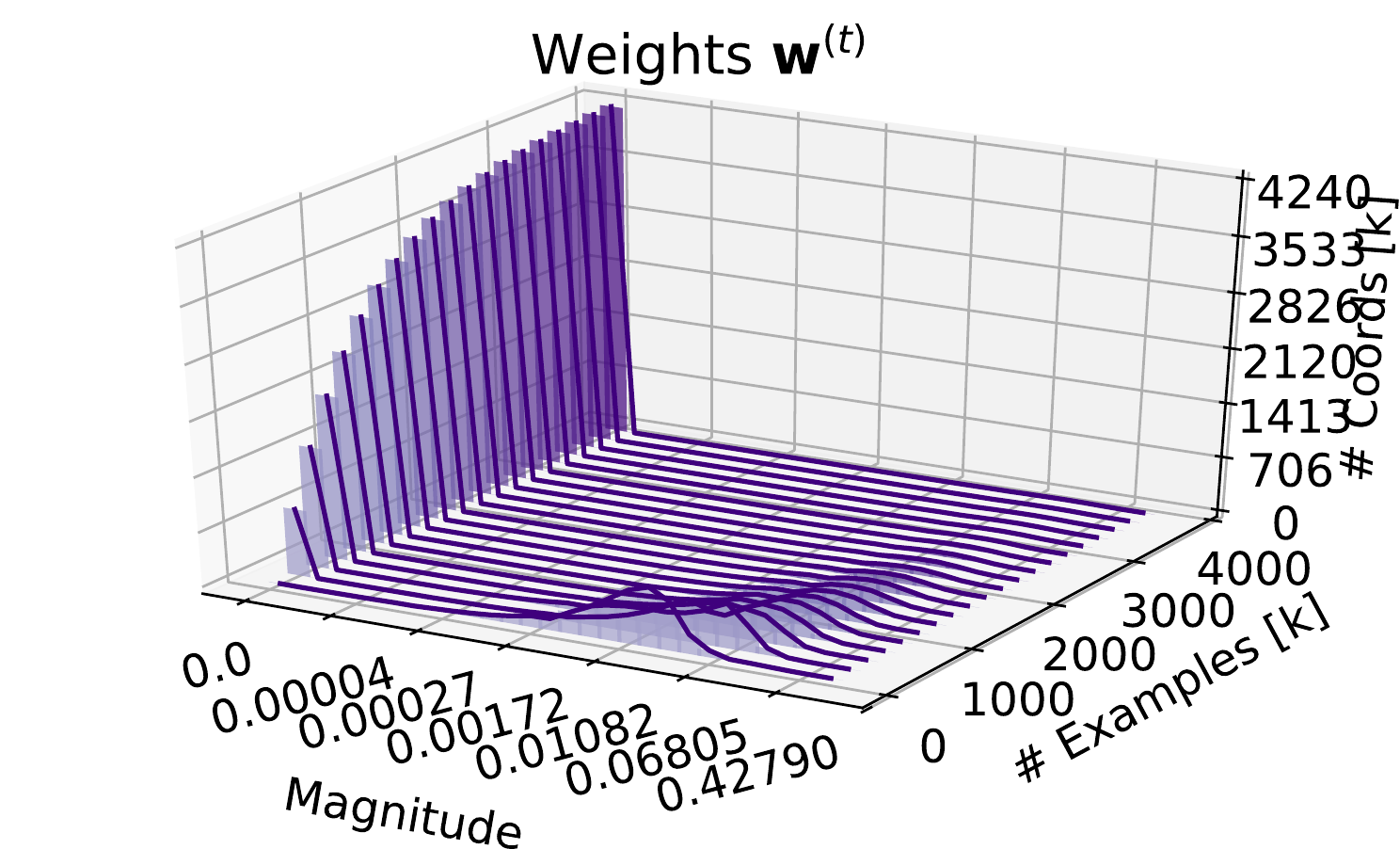}}&\makecell{\includegraphics[trim={3em 0 0 0},clip,width=0.45\textwidth]{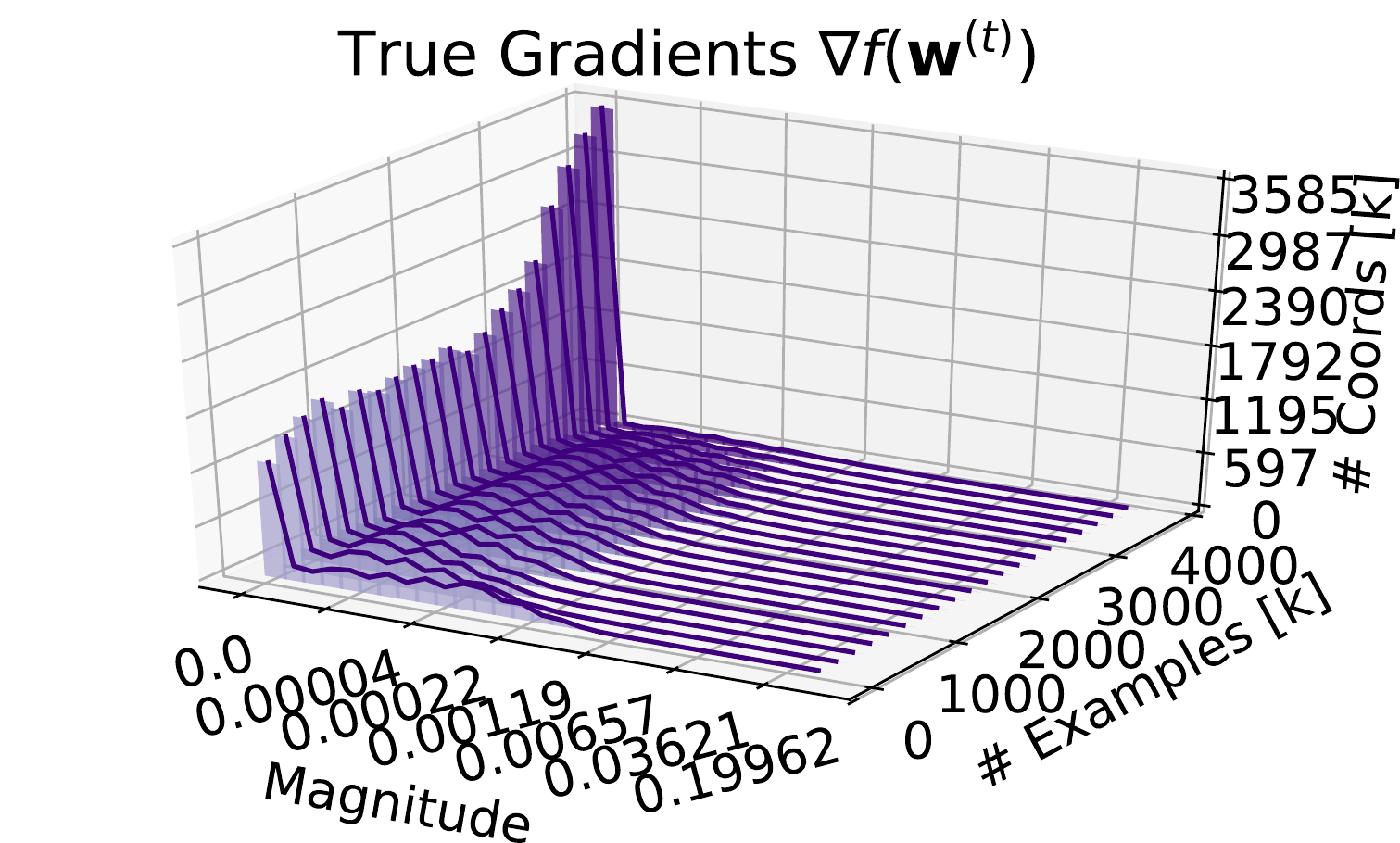}}\\
     \rotatebox[origin=t]{90}{freeze-L1}& \makecell{\includegraphics[trim={3em 0 0 0},clip,width=0.45\textwidth]{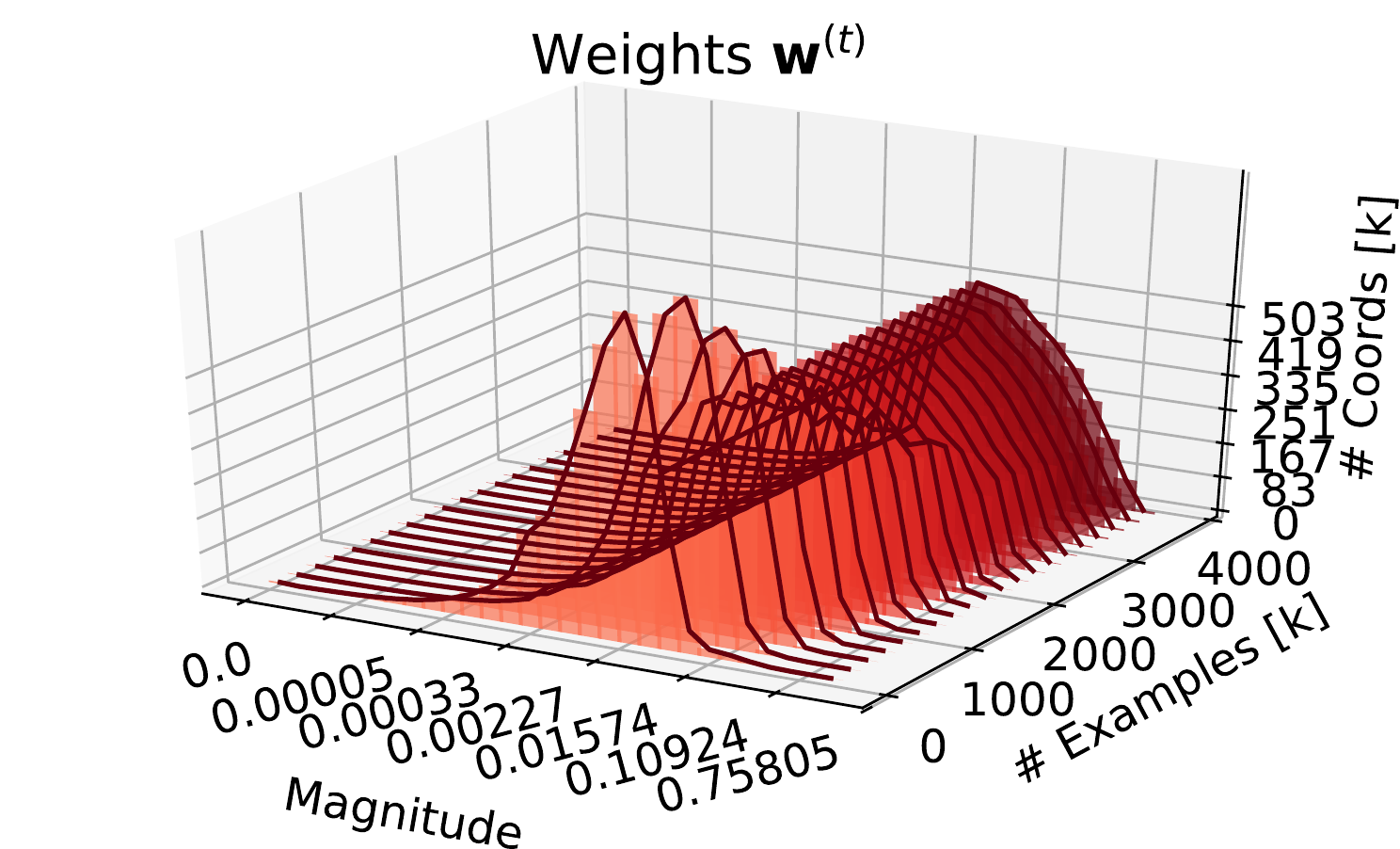}}&\makecell{\includegraphics[trim={3em 0 0 0},clip,width=0.45\textwidth]{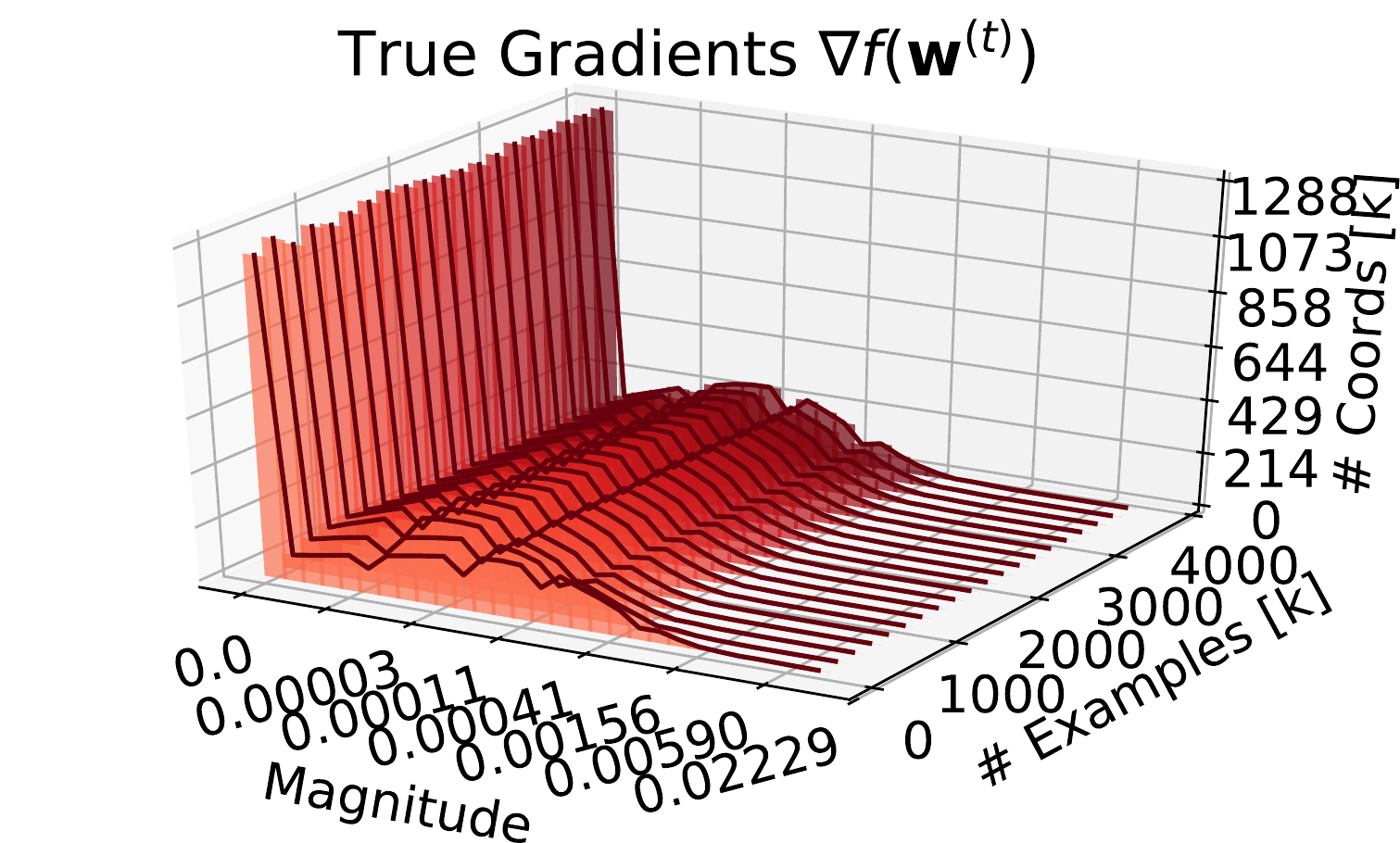}}\\
     \rotatebox[origin=t]{90}{freeze-random}& \makecell{\includegraphics[trim={3em 0 0 0},clip,width=0.45\textwidth]{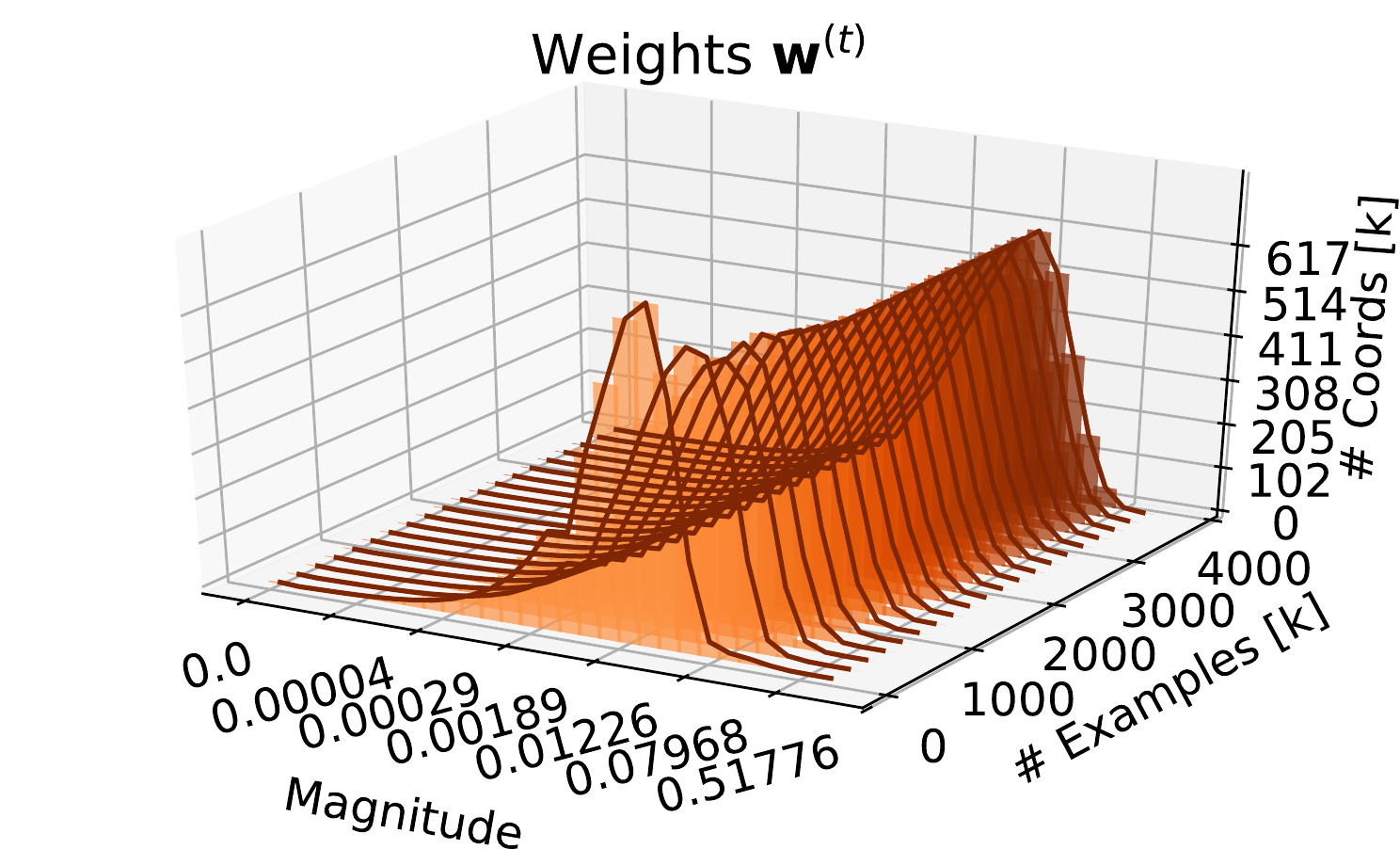}}&\makecell{\includegraphics[trim={3em 0 0 0},clip,width=0.45\textwidth]{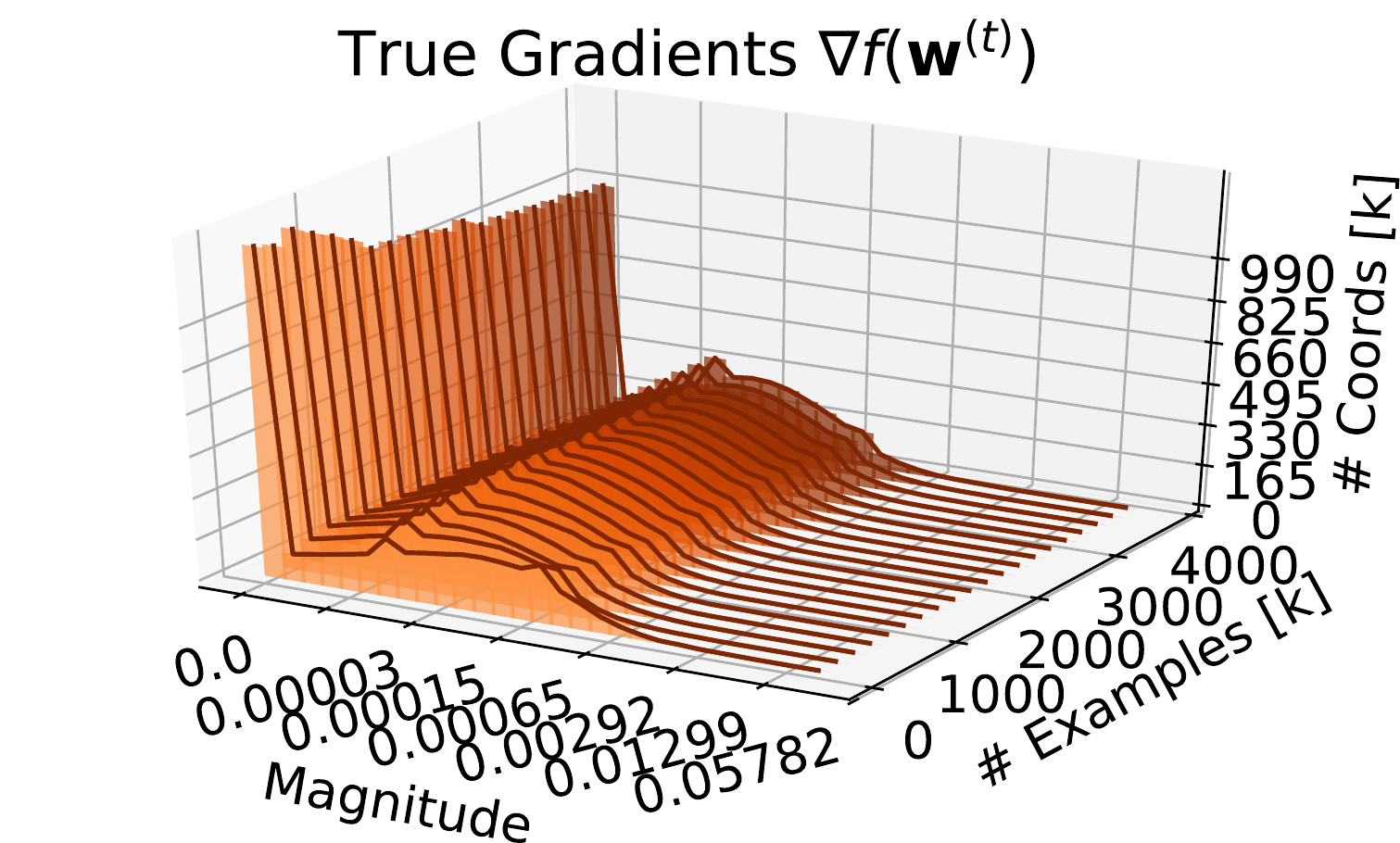}}
\end{tabular}
    \caption{Empirical gradient sparsity and values of learned weight vector for CIFAR-10.}
    \label{fig:sparsity-cifar}
\end{figure}

\begin{figure}
    \centering
\begin{tabular}{ccc}
     \rotatebox[origin=c]{90}{dense}& \makecell{\includegraphics[trim={3em 0 0 0},clip,width=0.45\textwidth]{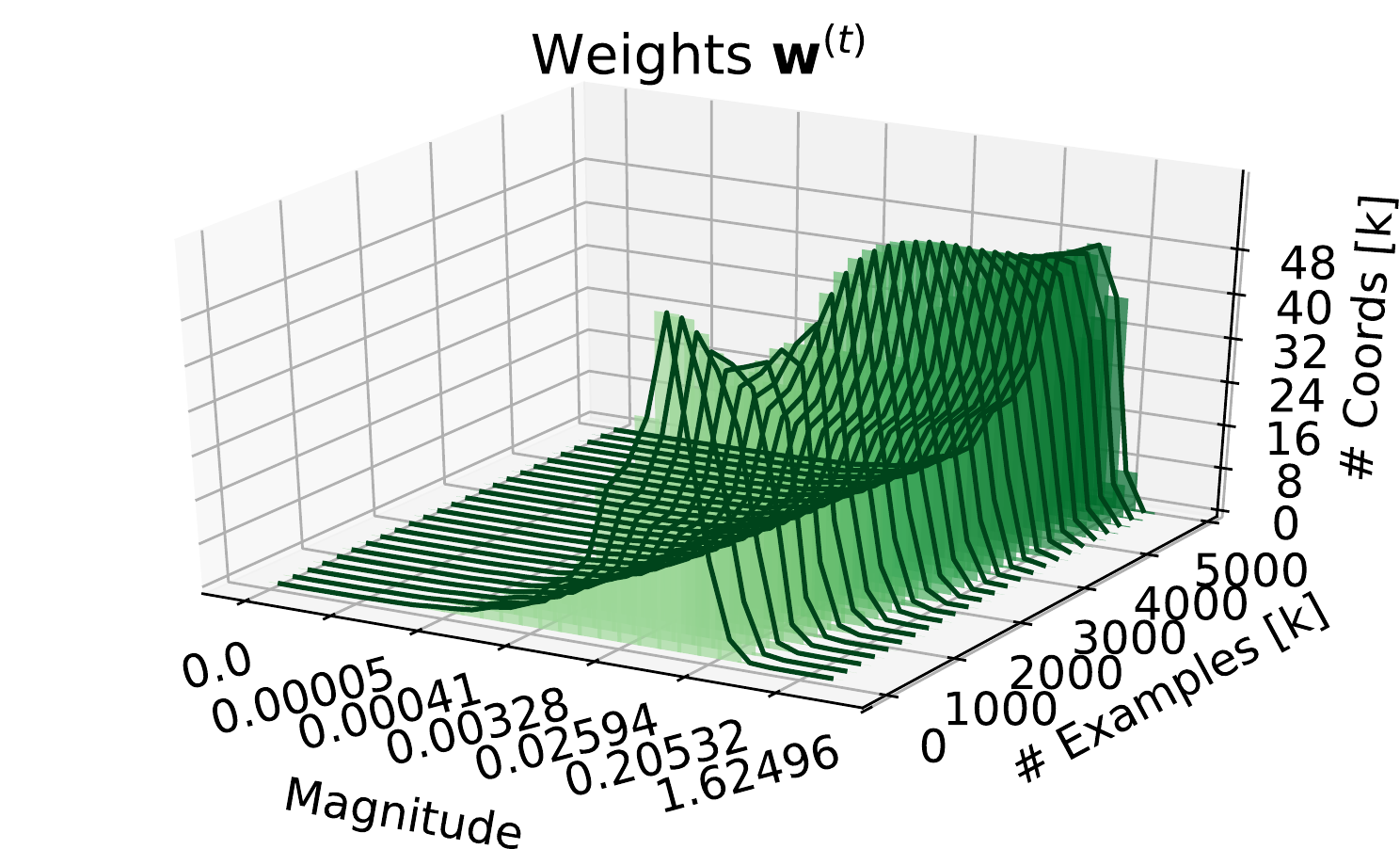}}&\makecell{\includegraphics[trim={3em 0 0 0},clip,width=0.45\textwidth]{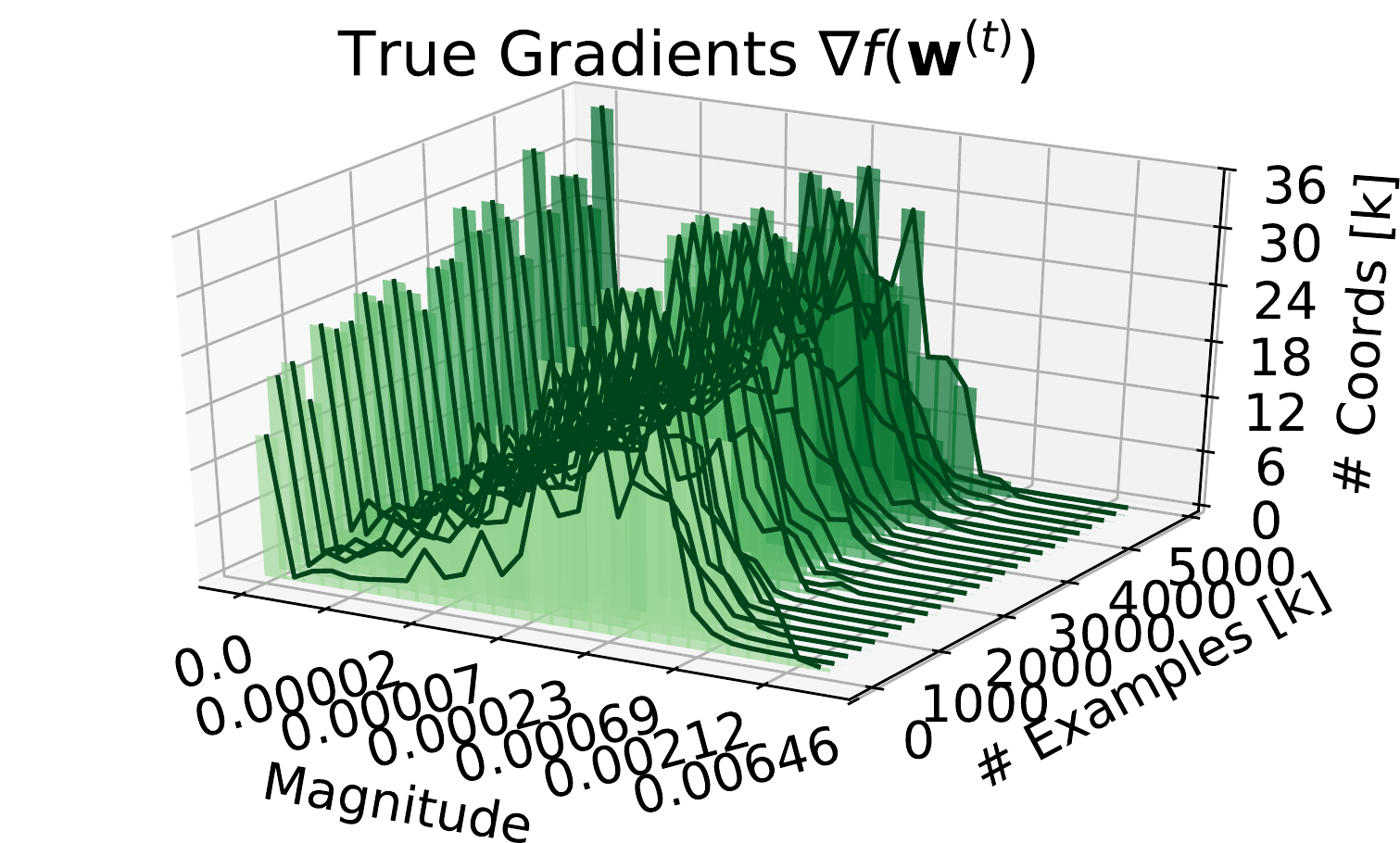}}\\
     \rotatebox[origin=t]{90}{prune-L1}& \makecell{\includegraphics[trim={3em 0 0 0},clip,width=0.45\textwidth]{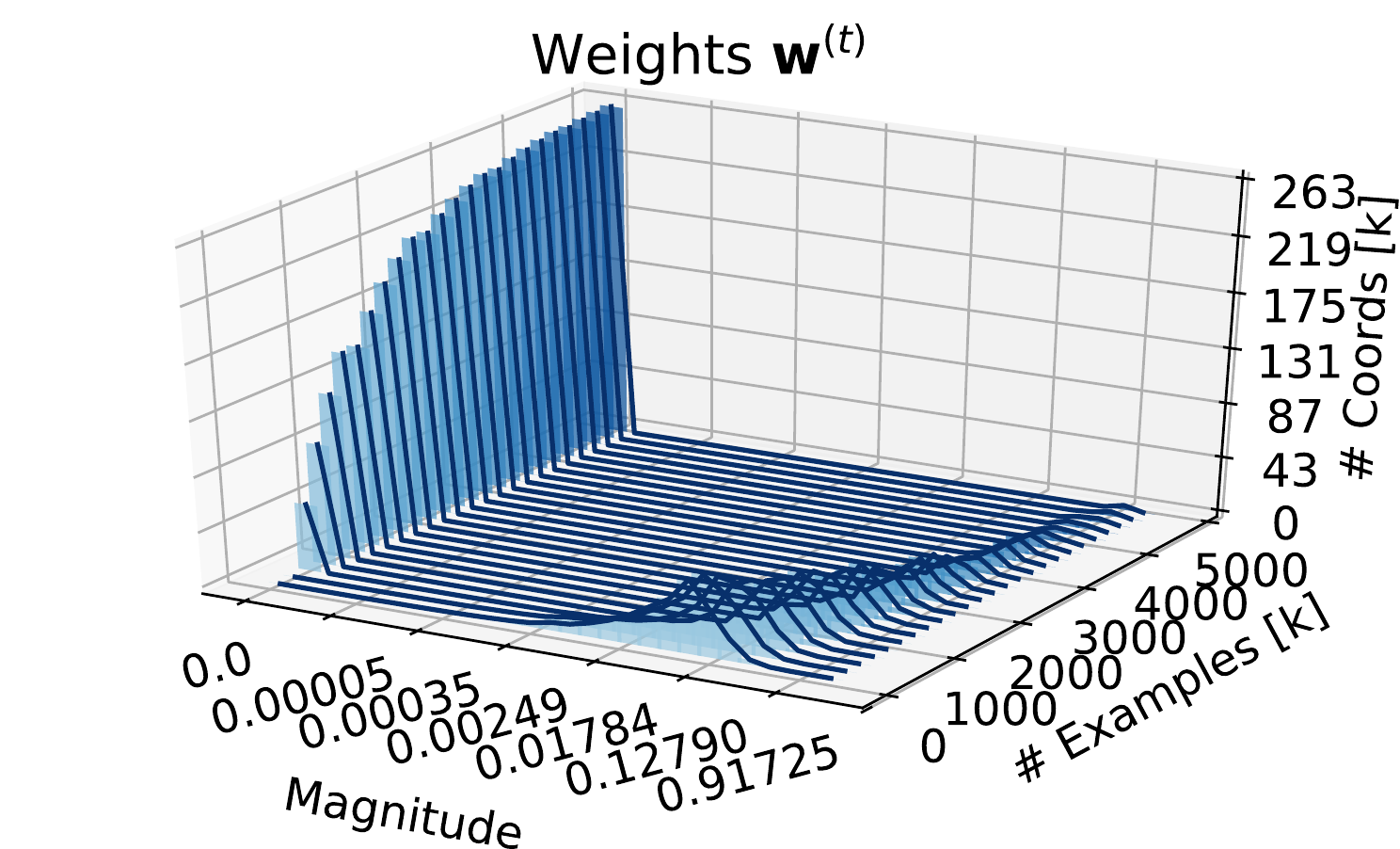}}&\makecell{\includegraphics[trim={3em 0 0 0},clip,width=0.45\textwidth]{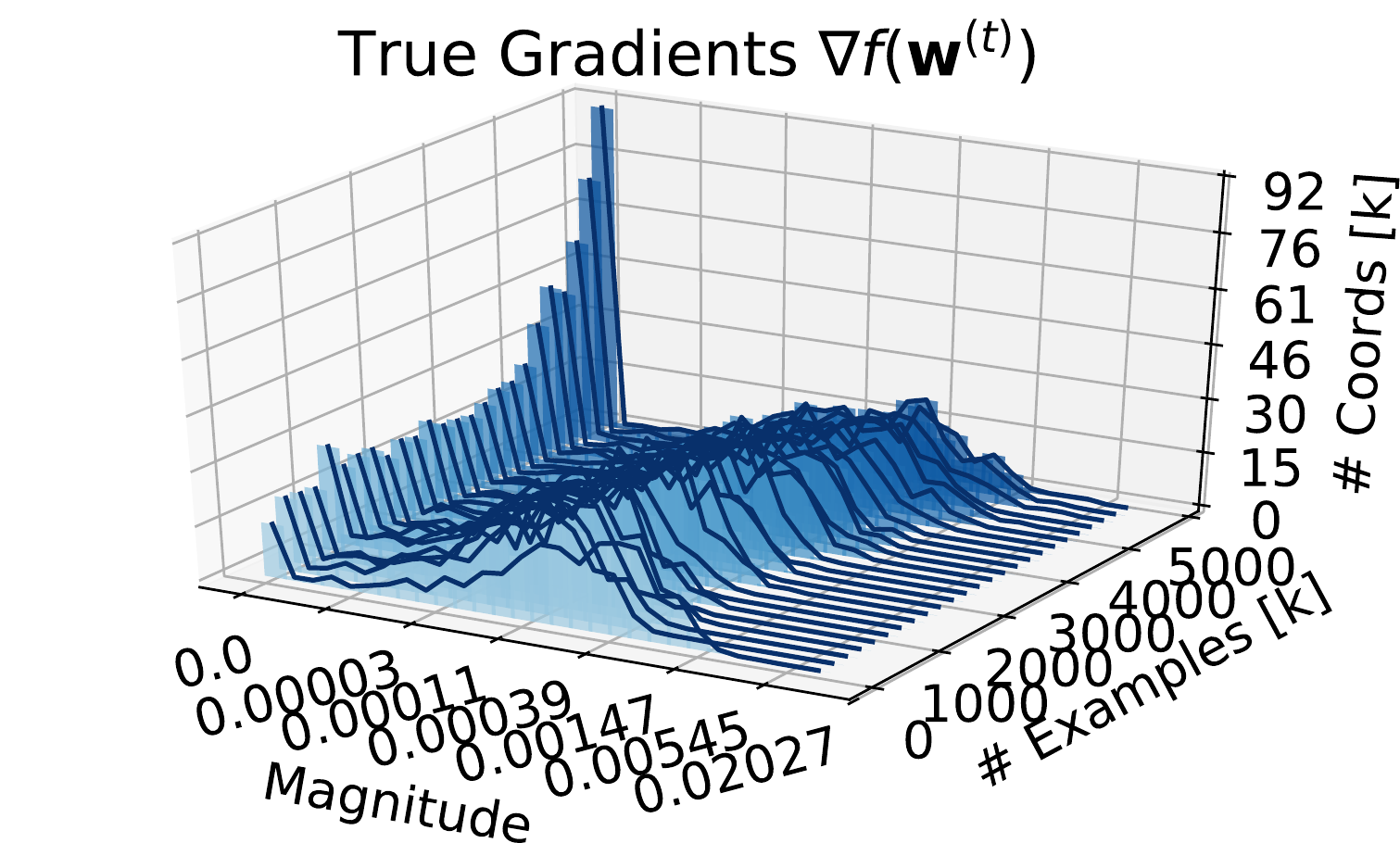}}\\
     \rotatebox[origin=t]{90}{prune-random}& \makecell{\includegraphics[trim={3em 0 0 0},clip,width=0.45\textwidth]{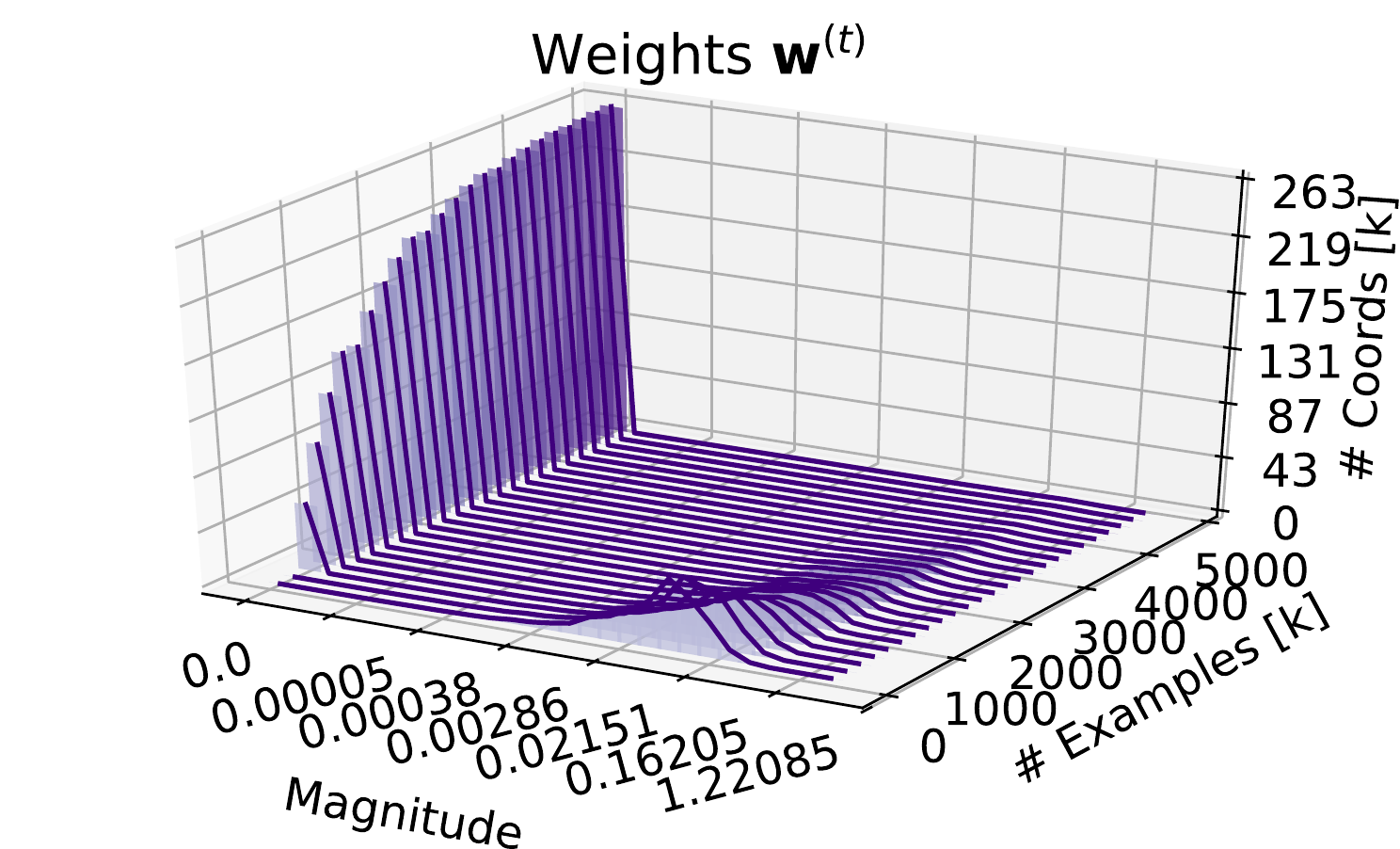}}&\makecell{\includegraphics[trim={3em 0 0 0},clip,width=0.45\textwidth]{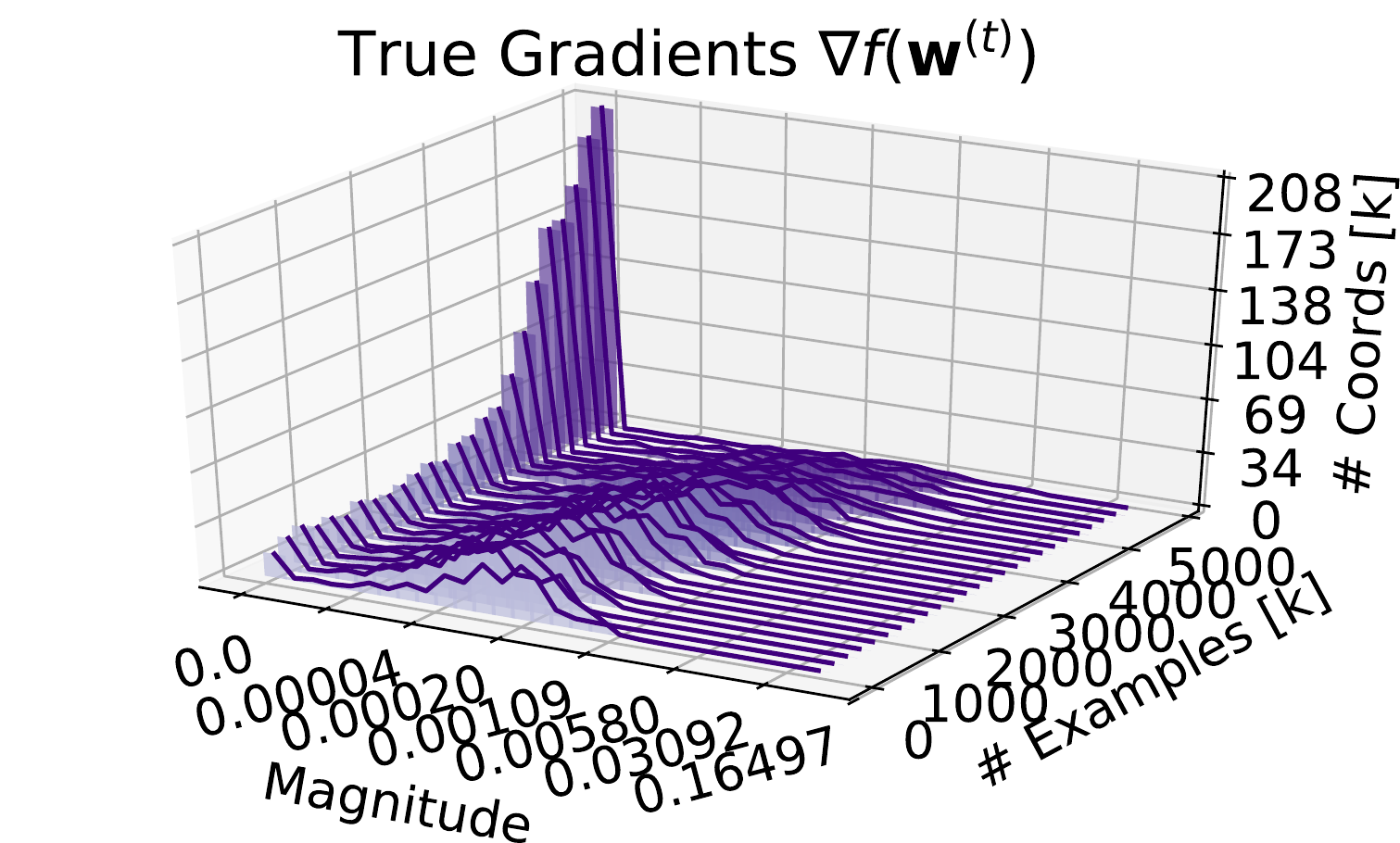}}\\
     \rotatebox[origin=t]{90}{freeze-L1}& \makecell{\includegraphics[trim={3em 0 0 0},clip,width=0.45\textwidth]{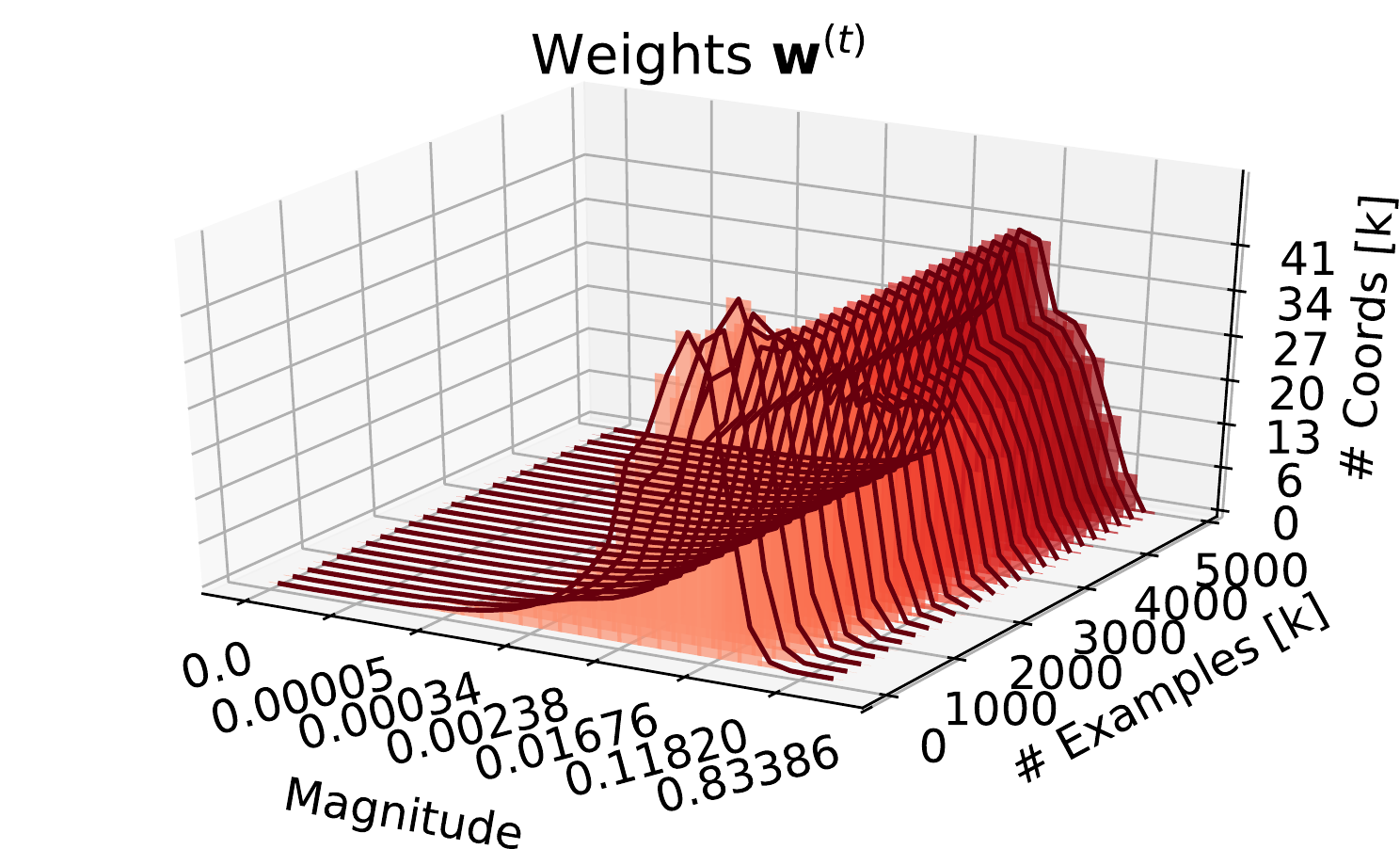}}&\makecell{\includegraphics[trim={3em 0 0 0},clip,width=0.45\textwidth]{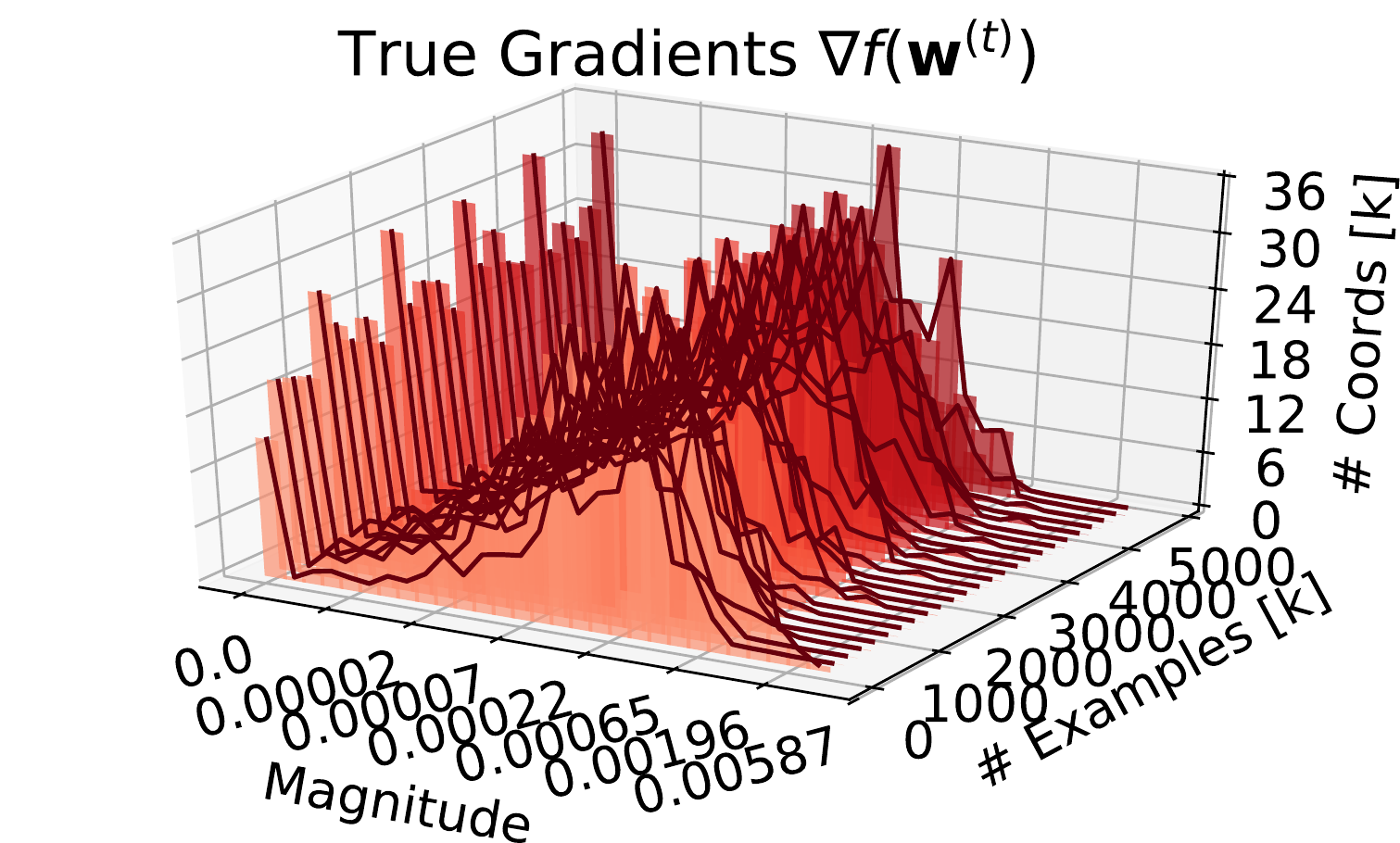}}\\
     \rotatebox[origin=t]{90}{freeze-random}& \makecell{\includegraphics[trim={3em 0 0 0},clip,width=0.45\textwidth]{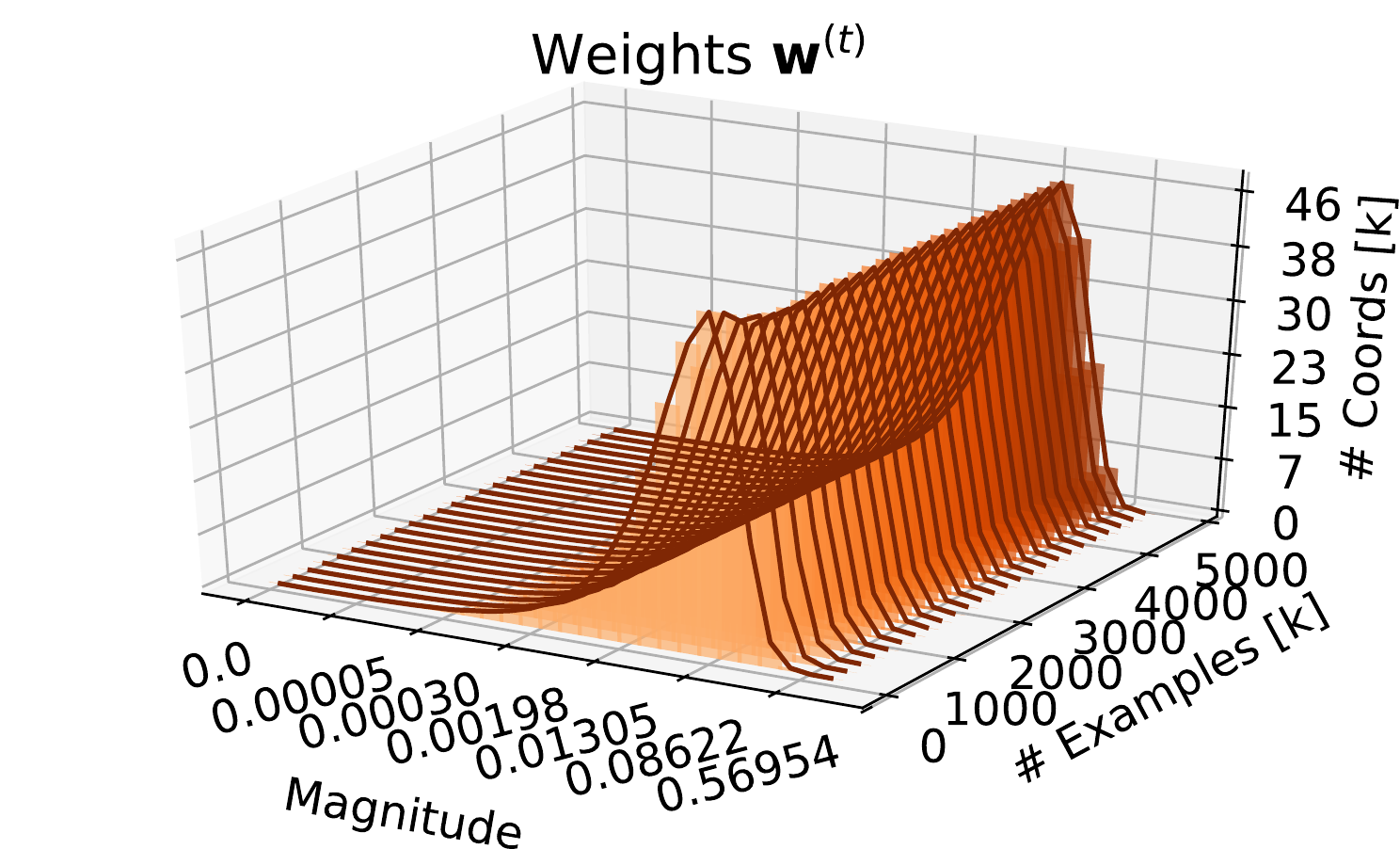}}&\makecell{\includegraphics[trim={3em 0 0 0},clip,width=0.45\textwidth]{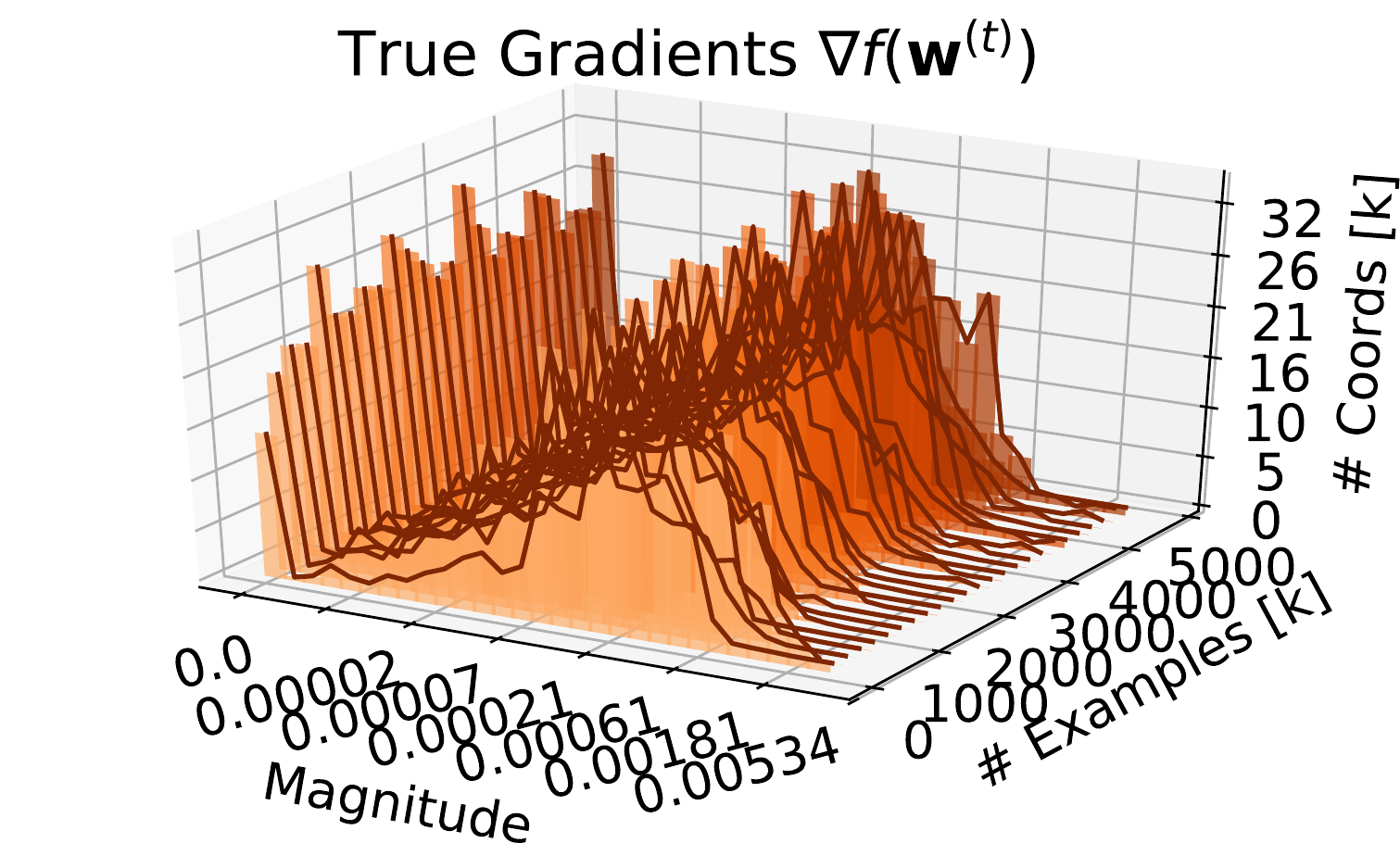}}
\end{tabular}
    \caption{Empirical gradient sparsity and values of learned weight vector for MNIST.}
    \label{fig:sparsity-mnist}
\end{figure}

\end{document}